\documentclass{article}
\usepackage{iclr2027_conference,times}

\usepackage{amsmath,amssymb,amsfonts,bm}
\usepackage{algorithm}
\usepackage[noend]{algpseudocode}
\usepackage{booktabs}
\usepackage{graphicx}
\usepackage{microtype}
\usepackage{xspace}
\usepackage{amsthm}
\usepackage{hyperref}
\usepackage{placeins}
\usepackage{wrapfig}
\usepackage{enumitem}
\usepackage{url}
\usepackage{tablefootnote}
\usepackage[most]{tcolorbox}
\hypersetup{hidelinks}
\usepackage{xcolor}
\definecolor{gainred}{HTML}{A64B4B}
\newcommand{\gain}[1]{{\textbf{\color{gainred}$+#1$}}}

\usepackage{amsmath,amsfonts,bm}

\def\eqref#1{equation~\ref{#1}}
\def\1{\bm{1}}

\DeclareMathAlphabet{\mathsfit}{\encodingdefault}{\sfdefault}{m}{sl}
\SetMathAlphabet{\mathsfit}{bold}{\encodingdefault}{\sfdefault}{bx}{n}

\newcommand{\E}{\mathbb{E}}

\newcommand{\KL}{D_{\mathrm{KL}}}

\newcommand{\D}{\mathcal{D}}
\newcommand{\ind}{\mathbb{I}}

\newcommand{\pold}{\pi_{\mathrm{old}}}

\newcommand{\pteacher}{\pi_{\text{Teacher}}}

\newcommand{\pji}{\pi_{\mathrm{JI}}}

\newcommand{\mintrl}{{MInTRL}\xspace}

\definecolor{promptolive}{HTML}{BBCC33}
\newtcolorbox[auto counter]{promptbox}[2][]{
    enhanced,
    breakable,
    width=\linewidth,
    colback=promptolive!8!white,
    colframe=black,
    colbacktitle=black,
    coltitle=white,
    fonttitle=\bfseries\small,
    title={Prompt~\thetcbcounter: #2},
    attach boxed title to top left={yshift=-0.1in,xshift=0.15in},
    boxed title style={boxrule=0pt,colframe=black},
    top=8pt,
    bottom=6pt,
    left=7pt,
    right=7pt,
    #1
}

\newtheorem{theorem}{Theorem}[section]
\newtheorem{assumption}[theorem]{Assumption}
\newtheorem{definition}[theorem]{Definition}
\newtheorem{lemma}[theorem]{Lemma}

\title{\mintrl: Off-policy Intervention can boost On-policy RL}

\author{
Mingyu Chen$^{1,2,*}$
\hspace{0.2em}
Yefan Tao$^{1}$
\hspace{0.2em}
Gerald Friedland$^{1}$
\hspace{0.2em}
Xuezhou Zhang$^{2}$
\hspace{0.2em}
Chris Kong$^{1,\dagger}$
\\
$^{1}$Amazon Web Services
\qquad
$^{2}$Boston University
}

\iclrfinalcopy % Uncomment for the camera-ready version, not for submission.
\begin{document}

\raggedbottom

\maketitle

\begingroup
\renewcommand{\thefootnote}{\fnsymbol{footnote}}
\footnotetext[2]{Correspondence to: Chris Kong, \texttt{luyankon@amazon.com}.}
\footnotetext[1]{Work done during an internship at Amazon Web Services.}
\endgroup

\begin{abstract}
Reinforcement learning with verifiable rewards is typically performed on-policy, keeping training data close to the current policy but limiting learning to trajectories that the policy can discover itself.
Off-policy methods such as supervised fine-tuning, on the other hand, can leverage external knowledge beyond the base model’s capabilities, but may suffer from large distribution shift.
The key challenge is thus to expand exploration without sacrificing learnability.
In this work, we introduce \emph{\underline{M}inimal \underline{Int}ervention \underline{R}einforcement \underline{L}earning} (\mintrl), which expands the exploration frontier through sparse, local interventions in otherwise on-policy rollouts. 
During generation, a judge--intervention policy periodically reviews the current policy's output, replaces erroneous suffixes with short corrections, and immediately returns control to the policy. 
During training, \mintrl adopts a sequence-level advantage-regression objective that eliminates the need for importance sampling.
We show that sparse, local interventions can substantially improve coverage beyond finite-budget on-policy sampling while preserving the overall on-policy nature of the resulting trajectories.
Across math and code benchmarks, \mintrl consistently outperforms standard on-policy and off-policy baselines.
Ablations show that \mintrl remains effective with self-intervention and across different judge policies, while performance peaks at moderate intervention intensity, highlighting the importance of intervening minimally.
These results establish minimal intervention as an effective paradigm for enhancing on-policy RL.
\end{abstract}

\section{Introduction}
\label{sec:introduction}

Reinforcement learning with verifiable rewards (RLVR) has become a key
component of post-training large language models (LLMs) for advanced reasoning
\citep{guo2025deepseek,kimi2025k15,yang2025qwen3}. By learning from
self-generated experience with automatically verifiable outcome rewards,
large-scale RL post-training has produced substantial improvements in
mathematical reasoning, code generation, and other challenging reasoning tasks.

Most existing RLVR methods are built around \emph{on-policy} learning, where the policy is optimized using rollouts sampled from the current policy.
While this design avoids behavior-policy mismatch, it also restricts the experience available for learning to what the policy can discover through its own sampling. 
Recent studies suggest that on-policy RLVR often struggles to discover reasoning trajectories beyond those
already reachable by the base model under finite sampling, limiting its ability to expand the model's reasoning capabilities
\citep{chen2026does,wu2025invisible}.

External knowledge can expose the policy to successful reasoning paths that are rarely reached through its own rollouts. 
However, incorporating this knowledge during RL introduces off-policy information, and the resulting trajectories may differ substantially from the current policy's own behavior. 
The challenge is therefore to expand exploration while
preserving the policy's ability to learn from this additional experience.
This raises a fundamental question:
\textbf{can LLMs effectively incorporate external knowledge during RL
training? If so, how much off-policy information should be introduced?}

\begin{figure}[t]
    \centering
    \includegraphics[height=0.245\textwidth]{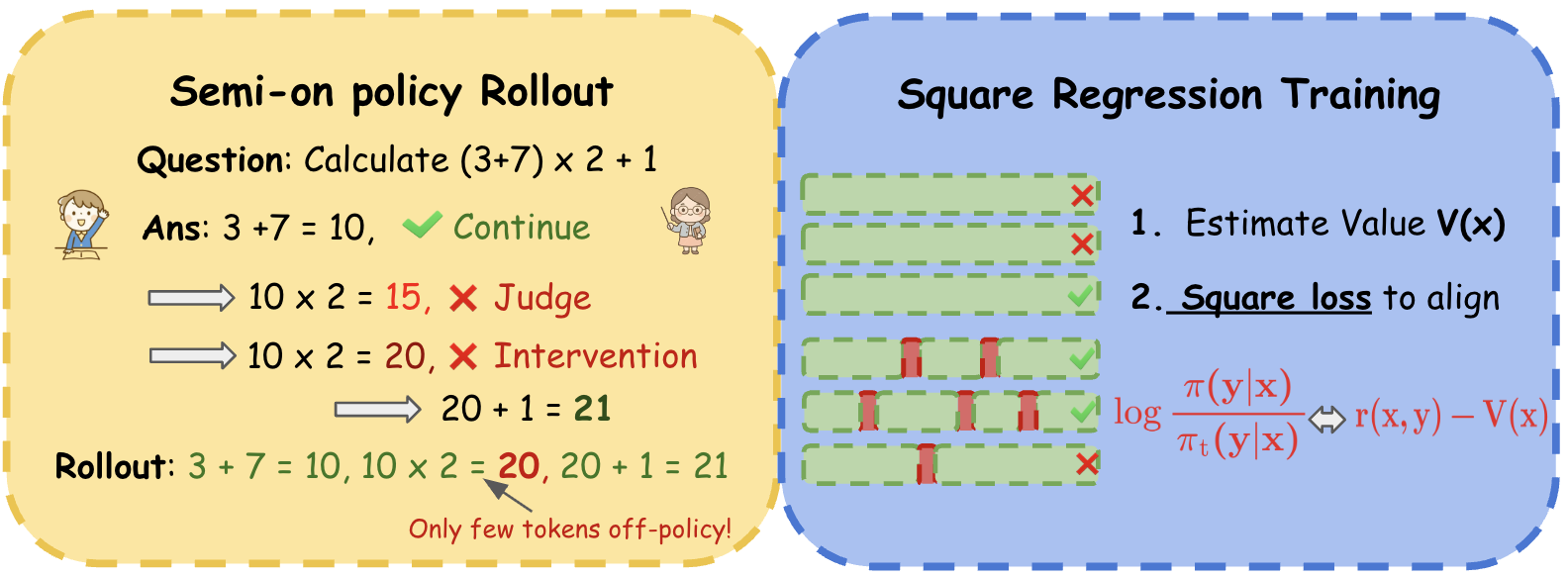}%
    \hfill
    \raisebox{-7.5pt}{%
        \includegraphics[height=0.26\textwidth]{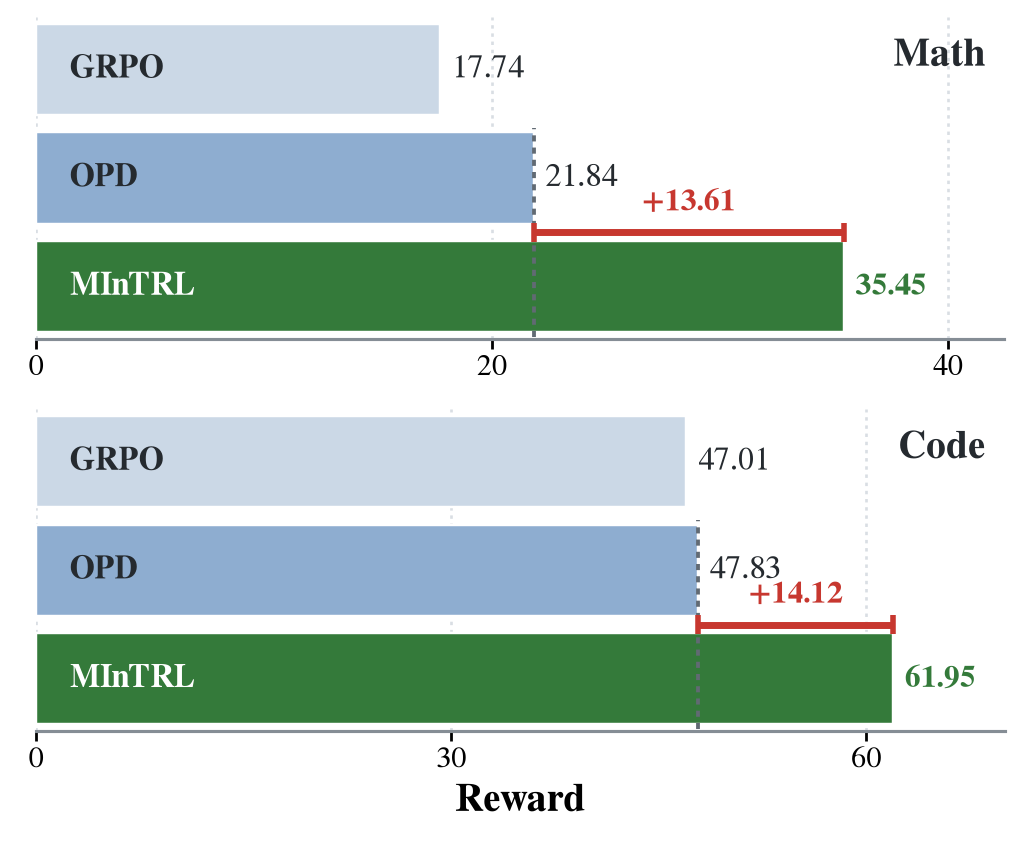}}
    \caption{Overview and main results of \mintrl.}
    \label{fig:overall-pipeline}
\end{figure}

To address this challenge, we propose
\emph{\underline{M}inimal \underline{Int}ervention
\underline{R}einforcement \underline{L}earning} (\mintrl), which incorporates
external knowledge through sparse local interventions while keeping most
of each training trajectory on-policy.
Figure~\ref{fig:overall-pipeline} illustrates the overall pipeline.
During rollouts, the current policy remains in control by default, while
a judge--intervention policy provides short local corrections when errors
are detected before returning control to the current policy.
These interventions help the policy reach successful reasoning paths
that are otherwise unlikely to emerge from its own rollouts, expanding
the experience available for learning with only a small fraction of
off-policy tokens.
To effectively learn from these mixed-provenance trajectories, we further
adopt a regression-based RL objective that avoids behavior-policy
importance weighting.
Our contributions are threefold:
\begin{itemize}[leftmargin=1.3em,topsep=0.3em,itemsep=0.2em]
    \item \textbf{Method.} We introduce \mintrl, a semi-on-policy RL framework
    that uses minimal intervention to expose the current policy to otherwise
    difficult-to-discover reasoning paths, together with a regression objective
    that learns from mixed-provenance trajectories without behavior-policy
    importance weighting.
    \item \textbf{Theory.} We provide a stylized support analysis illustrating a coverage–learnability trade-off under sparse optimal corrections:
    sparse off-policy interventions can expand the set of successful
    trajectories visible under finite sampling, while excessive intervention
    can push those trajectories beyond what the current policy can reliably
    learn from.
    \item \textbf{Experiments.} Across mathematical reasoning and code
    generation with Qwen3-1.7B and Qwen3-4B, \mintrl outperforms on-policy RL,
    distillation, and intervention-based baselines by up to 9.44 percentage
    points over the strongest competitor. 
\end{itemize}

\section{Method}
\label{sec:method}

We now present \mintrl, a framework that incorporates
external guidance through minimal intervention while keeping the resulting
training experience predominantly on-policy.
At each step $t$, \mintrl consists of two stages. During
\emph{semi-on-policy rollout sampling}, trajectories are generated predominantly
by the current policy $\pi_t$, with a judge--intervention policy supplying only
a small number of corrective tokens. During \emph{regression-based RL
training}, we optimize the policy over the collected semi-on-policy rollouts
using a sequence-level regression objective.
We next describe the two stages in detail.

\subsection{Semi-on-policy Rollout Sampling}
\label{sec:rollout-generation}

\begin{wrapfigure}[11]{r}{0.48\linewidth}
    \vspace{-1.4\baselineskip}
    \footnotesize
    \refstepcounter{algorithm}
    \label{alg:intervened-rollout}
    \hrule height 0.8pt
    \kern 2pt
    \noindent\textbf{Algorithm \thealgorithm}
    {Semi-on-policy Rollout Sampling}\par
    \kern 2pt
    \hrule
    \kern 2pt
    \begin{algorithmic}[1]
    \Require Input $x$; policies $\pi_t$ and $\pji$
    \State Initialize the accepted prefix $p\gets\varnothing$.
    \While{the response is incomplete}
        \State Sample $c\sim\pi_t(\cdot\mid x,p)$.
        \State $\pji$ reviews $c$ and returns
               \textsc{Keep} or \textsc{Revise}.
        \State \textbf{\textsc{Keep}:}
               update $p\gets p\Vert c$.
        \State \textbf{\textsc{Revise}:}
               sample $h\sim\pji(\cdot\mid x,p\Vert c_{<e})$.
        \State Update $p\gets p\Vert c_{<e}\Vert h$
               and resume with $\pi_t$.
    \EndWhile
    \State \Return $p$.
    \end{algorithmic}
    \kern 2pt
    \hrule
\end{wrapfigure}

We first describe the \emph{semi-on-policy rollout sampling}
procedure.
Let $x\sim\mathcal{D}$ denote an input prompt,
$\pi_t$ the current policy at iteration $t$, and
$y=(y_1,\ldots,y_T)$ a response with verifiable outcome
reward $r(x,y)$.
Standard on-policy RLVR generates the entire response
from $\pi_t$.
In contrast, our procedure keeps $\pi_t$ in control of
generation while allowing a judge--intervention policy
$\pji$ to contribute a small number of corrective tokens
when needed.
The judge--intervention policy can be either a stronger
teacher model or the same policy model
with privileged context \citep{zhao2026self,shenfeld2026self}.
Algorithm~\ref{alg:intervened-rollout} summarizes
the procedure.

Specifically, for each rollout, $\pi_t$ generates the response
incrementally in short chunks, with each newly generated chunk
reviewed by $\pji$ in the context of the accepted prefix so far.
For example, the first chunk is sampled as
\begin{equation}
c_1=(y_1,\ldots,y_k)
\sim \pi_t(\cdot \mid x).
\end{equation}
The judge-intervention policy $\pji$ then reviews $c_1$ together with the input $x$, deciding whether to keep the chunk or revise it and, in the latter case, identifying the earliest erroneous token:
\begin{equation}
(a_1,e_1)=\mathcal{J}_{\mathrm{JI}}(x,c_1),
\qquad
a_1\in\{\textsc{Keep},\textsc{Revise}\}.
\end{equation}
If $a_1=\textsc{Keep}$, the entire chunk $c_1$ is retained and appended to the accepted prefix.
If $a_1=\textsc{Revise}$, we truncate $c_1$ at the detected error position $e_1$ and preserve only the prefix $(y_1,\ldots,y_{e_1-1})$.
The intervention policy $\pji$ then generates a short corrective continuation $h_1\sim\pji(\cdot\mid x,y_1,\ldots,y_{e_1-1})$ from the retained prefix.
We then append $h_1$ to the retained prefix, yielding $(y_1,\ldots,y_{e_1-1})\Vert h_1$.
After either keeping the original chunk or forming the corrected prefix, control returns to $\pi_t$, which resumes generation from the resulting prefix.
We repeat this generate--review--intervene process until the response is completed.

\iffalse
\begin{figure}[!t]
    \centering
    \begin{minipage}[t]{0.45\linewidth}
        \vspace{0pt}
        {\centering
        \includegraphics[
            height=3.2cm,
            trim=12.5bp 10bp 11bp 13.5bp,
            clip
        ]{figures/mintrl_rollout_pipeline_new.png}\par}
        \vspace{0.5ex}
        \refstepcounter{figure}
        \label{fig:mintrl-rollouts}
        {\footnotesize\raggedright
        Figure~\thefigure:
        \textbf{Semi-on-policy rollout generation.}
        $\pji$ locally revises erroneous chunks before returning control to $\pi_t$.\par}
    \end{minipage}
    \hfill
    \begin{minipage}[t]{0.53\linewidth}
        \vspace{0pt}
        {\centering
        \includegraphics[
            height=3.2cm
        ]{figures/mintrl_proximity_reward.pdf}\par}
        \vspace{0.5ex}
        \refstepcounter{figure}
        \label{fig:mintrl-proximity-reward}
        {\footnotesize\raggedright
        Figure~\thefigure:
        \textbf{Perplexity under $\pi_t$ and rollout reward.}
        Intervention rollouts remain close to on-policy rollouts in perplexity while achieving higher rewards.\par}
    \end{minipage}
\end{figure}
\FloatBarrier
\fi

The key property of this construction is its locality.
The judge-intervention policy generates only a short corrective span, after
which the current policy immediately resumes generation.
Consequently, the resulting trajectory incorporates external information at
the point of error while remaining predominantly generated by $\pi_t$.
%We refer to these mixed-generated trajectories as \emph{semi-on-policy} rollouts.

\subsection{Learning from the Collected Rollouts}
\label{sec:policy-update}

Given the semi-on-policy rollouts generated above, the next question is how to optimize the policy with these trajectories. 
We use the rule-based reward as the learning signal, rather than directly distilling the intervention policy, since the intervention is only intended to steer generation and its local corrections may not always provide reliable supervision. 
A natural first choice is to apply a standard GRPO-style objective. 
However, for these mixed-policy trajectories, doing so requires importance-sampling corrections to account for the behavior-policy mismatch.
Even when interventions affect only a small fraction of tokens, importance ratios can accumulate across the trajectory and become highly unstable.
To avoid this issue, we adopt a \emph{regression-based RL objective} that learns directly from rewarded semi-on-policy trajectories without explicitly dividing by their behavior probabilities.

\paragraph{Regression target.}
Let $\pi_t$ denote the current policy before the update.
  We begin from the standard KL-regularized RL objective
\begin{equation}
    \max_{\pi}\;
    \E_{y\sim\pi(\cdot\mid x)}[r(x,y)]
    -
    \beta\KL\!\left(
        \pi(\cdot\mid x)\,\|\,\pi_t(\cdot\mid x)
    \right).
    \label{eq:mintrl-kl-objective}
\end{equation}
The optimal policy and its normalizing value have the closed forms
\begin{align}
    V^\star_\beta(x)
    &=
    \beta\log
    \E_{y\sim\pi_t(\cdot\mid x)}
    \left[
        \exp\left(r(x,y)/\beta\right)
    \right],
    \label{eq:mintrl-soft-value}\\
    \pi^\star_\beta(y\mid x)
    &=
    \pi_t(y\mid x)
    \exp\left(
        \bigl(r(x,y)-V^\star_\beta(x)\bigr)/\beta
    \right).
    \label{eq:mintrl-optimal-policy}
\end{align}
Equivalently, their pointwise optimality condition is
\begin{equation}
    \beta\log
    \frac{\pi^\star_\beta(y\mid x)}{\pi_t(y\mid x)}
    =
    r(x,y)-V^\star_\beta(x)
    \equiv A^\star_\beta(x,y).
    \label{eq:mintrl-kl-condition}
\end{equation}
Thus, $A^\star_\beta(x,y)$ directly specifies how the likelihood of each trajectory should change relative to $\pi_t$.
Following squared advantage-regression methods
\citep{kimi2025k15,brantley2026accelerating,kimik2,ritter2026llms}, we fit this condition directly. The corresponding regression objective is
\begin{equation}
    \mathcal{L}(\theta)
    =
    \E_{(x,y)\in\D}
    \left[
        \beta\log
        \frac{\pi_\theta(y\mid x)}{\pi_t(y\mid x)}
        -
        A^\star_\beta(x,y)
    \right]^2 .
    \label{eq:mintrl-population-loss}
\end{equation}
\textbf{Crucially, the regression target in \eqref{eq:mintrl-population-loss} is well-defined for any trajectory $y$ with $\pi_t(y\mid x)>0$, regardless of the behavior policy that generated it.}
Thus, semi-on-policy trajectories can be used directly as regression examples without behavior-policy importance weighting; the sampling distribution only determines which trajectories are observed.

\subsection{Practical Implementation}
\label{sec:practical-implementation}

\paragraph{Step-level intervention localization.}
In practice, reliably identifying the exact first erroneous token is impossible.
We therefore perform intervention at the level of reasoning or code steps rather than individual tokens.
Specifically, each policy-generated chunk is first segmented into steps using double-newline delimiters (\texttt{\textbackslash n\textbackslash n}) or code-fence boundaries.
The judge then reviews the resulting sequence of steps and identifies the earliest incorrect step.
The judge and intervention prompts are provided in
Appendix~\ref{app:prompt-formats}.
We retain all steps preceding this point, discard the remaining suffix, and invoke $\pji$ to generate a short corrective continuation from the retained prefix.

\paragraph{Control-based advantage estimation.}
Computing the soft value $V^\star_\beta(x)$ in \eqref{eq:mintrl-soft-value} exactly requires marginalizing over all possible responses.
In practice, for each prompt $x$, we collect $N_m$ semi-on-policy rollouts using Alg~\ref{alg:intervened-rollout} together with $N_c$ pure on-policy control rollouts from the current policy $\pi_t$,
\begin{equation}
y_j^{\mathrm{ctrl}}
\sim
\pi_t(\cdot\mid x),
\qquad
j=1,\ldots,N_c.
\label{eq:mintrl-control-rollouts}
\end{equation}
The control rollouts are used to estimate the value baseline, while both control and semi-on-policy rollouts are included as regression examples.
To decouple value estimation from the strength of policy regularization, we use two temperatures: $\beta_1$ for the value target and $\beta_2$ for policy regression, following prior advantage-regression formulations~\citep{brantley2026accelerating,ritter2026llms}.
The resulting Monte Carlo estimate of the soft value is
\begin{equation}
\widehat V_{\beta_1}(x)
=
\beta_1
\log\left[
\frac{1}{N_c}
\sum_{j=1}^{N_c}
\exp\left(
r(x,y_j^{\mathrm{ctrl}})/\beta_1
\right)
\right].
\label{eq:mintrl-soft-value-estimator}
\end{equation}
The temperature $\beta_1$ controls how the value target aggregates rewards: as $\beta_1\rightarrow 0$, the soft value approaches the maximum observed reward, whereas as $\beta_1\rightarrow\infty$, it approaches the empirical mean reward.
We adopt the latter regime because the mean-reward baseline has been found effective in practice~\citep{kimi2025k15}, and choosing $\beta_1\gg\beta_2$ leads to a more conservative regression target that is less sensitive to a small number of high-reward off-policy trajectories~\citep{sakhi2026pessimistic}.
Accordingly, we take the large-$\beta_1$ limit and use
\begin{equation}
\widehat V_{\mathrm{ctrl}}(x)
=
\frac{1}{N_c}
\sum_{j=1}^{N_c}
r(x,y_j^{\mathrm{ctrl}}),
\qquad
\widehat A(x,y)
=
r(x,y)-\widehat V_{\mathrm{ctrl}}(x).
\label{eq:mintrl-control-advantage}
\end{equation}
Importantly, we estimate the baseline exclusively from on-policy control rollouts, while using both control and semi-on-policy rollouts as regression examples.
The same rule-based reward is applied to both groups.

\paragraph{A relaxed anchor for intervention tokens.}
The regression objective in \eqref{eq:mintrl-population-loss}
anchors the likelihood of each trajectory to the current policy $\pi_t$.
For intervention-authored tokens, however, $\pi_t$ may assign extremely
small probabilities, which can make the original reference overly
restrictive.
To relax this reference, we consider using their log probabilities
under the intervention policy $\pji$.
These log probabilities may be unavailable when $\pji$ is accessed
as a black-box model.
We therefore anchor intervention tokens with a constant $\kappa<0$:
\begin{equation}
\log\widetilde{\pi}_t(y_s\mid x,y_{<s})
=
\begin{cases}
\log\pi_t(y_s\mid x,y_{<s}),
& y_s \text{ is policy-authored},\\
\kappa,
& y_s \text{ is intervention-authored}.
\end{cases}
\label{eq:mintrl-intervention-anchor}
\end{equation}
Intuitively, $\kappa$ can be viewed as a shared reference level
motivated by the intervention policy's expected per-token log
probability,
$\mathbb{E}_{v}
[\log\pji(v\mid x,y_{<s})]$.
This interpretation is heuristic: $\kappa$ remains a tunable reference
value, and we do not assume that it accurately estimates this expectation.
The anchor modification enters the sequence-level reference score
additively in log space and is confined to sparse intervention spans,
while policy-authored tokens remain anchored to $\pi_t$.
Accordingly, we use it as a practical anchoring device rather than interpreting it as a standalone normalized policy.

\paragraph{Early stopping.}
In practice, the judge-intervention policy $\pji$ is itself imperfect, and the corrections it provides may occasionally be erroneous.
Nevertheless, during the early stage of training, these interventions can still provide useful guidance by helping the current policy reach successful trajectories that it would otherwise struggle to discover through on-policy sampling alone.
As the policy improves, the marginal benefit of intervention diminishes, while erroneous or unnecessary corrections continue to introduce harmful off-policy information.
We therefore use intervention only during an initial phase of training.
After a preset intervention phase, we disable $\pji$ and continue training with standard on-policy RL. This concentrates external guidance in the early stage, where it is most useful, while keeping the later stage fully on-policy.

\section{Theoretical Perspective}
\label{sec:coverage-learnability}

We formalize \mintrl as a sparse perturbation of the current policy: its rollout
distribution $\mu_N$ follows $\pi_t$ except at most $N$ corrected token
positions.  
This turns our problem into analyzing how sparse corrections
expand the support of correct rollouts.  Under a finite rollout budget,
on-policy training behaves as though the population reward were truncated to
empirically visible support.  
We first characterize this objective gap and
then bound how far \mintrl can expand support while remaining learnable.  This
perspective is related to empirical-support analyses of RLVR and classical
coverage conditions in reinforcement learning
\citep{wu2025invisible,xie2022coverage}.

\begin{definition}[$\epsilon$-support of correct rollouts]
\label{definition:epsilon-support}
For the binary verifier reward considered in this section, let
$\mathcal{C}(x)=\{y\in\mathcal{Y}:r(x,y)=1\}$ denote the set of correct
rollouts for prompt $x$.  For any threshold $\epsilon\in(0,1)$, the
$\epsilon$-support of correct rollouts under policy $\pi$ is
\begin{equation}
    \operatorname{supp}_{\epsilon}(\pi;x)
    =
    \left\{
        y\in\mathcal{C}(x):
        \pi(y\mid x)>\epsilon
    \right\}.
    \label{eq:epsilon-support}
\end{equation}
\end{definition}

Finite-budget visibility and trainability need not share the same likelihood
threshold.  We capture this distinction as follows.

\begin{assumption}[Inference and training support]
\label{assumption:inference-training-support}
There exist thresholds
$\epsilon_{\mathrm{train}}<\epsilon_{\mathrm{inf}}$ such that
$\operatorname{supp}_{\epsilon_{\mathrm{inf}}}(\pi_t;x)$ denotes the rollouts
that can be empirically sampled under finite-budget inference, while
$\operatorname{supp}_{\epsilon_{\mathrm{train}}}(\pi_t;x)$ denotes the
rollouts from which RL training can still learn.
\end{assumption}

\begin{wrapfigure}{r}{0.43\linewidth}
    \centering
    \vspace{-1em}
    \includegraphics[
        width=\linewidth
    ]{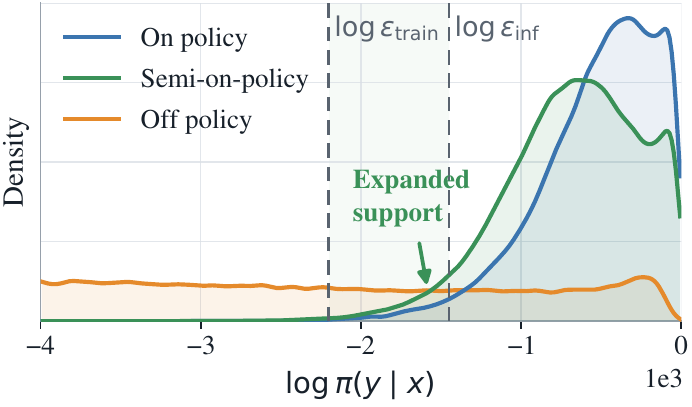}
    \caption{\textbf{Empirical support expansion.}
    The dashed thresholds visualize
    Assumption~\ref{assumption:inference-training-support}.  \mintrl adds mass
    beyond inference support while remaining within training support,
    illustrating Theorem~\ref{theorem:sparse-correction-support};
    off-policy rollouts extend largely beyond the trainable region.}
    \label{fig:rollout-logprob-distribution}
\end{wrapfigure}

The ordering
$\epsilon_{\mathrm{train}}<\epsilon_{\mathrm{inf}}$ is reasonable:
finite-budget sampling only reveals an empirically visible subset of correct
rollouts \citep{wu2025invisible,chen2026does}, while asynchronous and
stale-rollout studies show that data can remain useful for training after
becoming off-policy
\citep{hou2026single,zheng2026prosperity}.
We first characterize the optimization limit imposed by the smaller
inference support.

\begin{lemma}[Finite-rollout objective gap]
\label{lemma:finite-budget-objective}
Under Assumption~\ref{assumption:inference-training-support}, the empirical RL
objective at iteration $t$ is
\begin{equation}
    \E_{y\sim\pi(\cdot\mid x)}
    \left[
        r(x,y)\,
        \ind\!\left\{
            y\in\operatorname{supp}_{\epsilon_{\mathrm{inf}}}(\pi_t;x)
        \right\}
    \right].
    \label{eq:visible-objective}
\end{equation}
In this regard, the objective differs from the population objective
$\E_{y\sim\pi(\cdot\mid x)}[r(x,y)]$: the optimality gap is one when the
inference support is empty and zero otherwise.
\end{lemma}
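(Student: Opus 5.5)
The plan has two parts. First, justify the form of the empirical objective in \eqref{eq:visible-objective} from Assumption~\ref{assumption:inference-training-support}. Then compute the optimality gap by optimizing both objectives over $\pi$.

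\textbf{Deriving the empirical objective.} Assumption~\ref{assumption:inference-training-support} says that under finite-budget sampling from $\pi_t$, only rollouts in $\operatorname{supp}_{\epsilon_{\mathrm{inf}}}(\pi_t;x)$ are ever observed with nonzero reward. Correct rollouts outside this set are never sampled, so they contribute no gradient signal. They are therefore indistinguishable, from the learner's point of view, from rollouts with reward zero. The effective reward seen at iteration $t$ is thus $r(x,y)\,\ind\{y\in\operatorname{supp}_{\epsilon_{\mathrm{inf}}}(\pi_t;x)\}$, and taking expectation under $\pi$ gives \eqref{eq:visible-objective}. I will state this as a modeling identification that follows directly from the assumption, not as a derived probabilistic fact.

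\textbf{Computing the gap.} Define the gap as
\[
\max_\pi \E_{y\sim\pi}[r(x,y)]-\max_\pi \E_{y\sim\pi}\bigl[r(x,y)\,\ind\{y\in S\}\bigr],
\qquad S=\operatorname{supp}_{\epsilon_{\mathrm{inf}}}(\pi_t;x).
\]
Since $r$ is binary, the population maximum equals one whenever $\mathcal{C}(x)\neq\varnothing$: take $\pi$ to be a point mass on any correct $y$. The truncated maximum is handled in two cases.
\begin{itemize}
\item If $S\neq\varnothing$, put all mass on some $y\in S\subseteq\mathcal{C}(x)$. This attains value one, so the gap is zero.
\item If $S=\varnothing$, the integrand vanishes identically, so every $\pi$ scores zero. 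The gap is then one, assuming $\mathcal{C}(x)\neq\varnothing$.
\end{itemize}
I will note explicitly that the case $\mathcal{C}(x)=\varnothing$ is degenerate and implicitly excluded, since there both objectives are zero.

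\textbf{Main obstacle.} The only delicate point is interpretation. The statement is a claim about optimal values over unrestricted policies $\pi$, not about values at a fixed $\pi$ or at $\pi_t$. I will make this explicit so that the $0/1$ dichotomy is rigorous. I will also remark that restricting $\pi$ to policies near $\pi_t$ would give a softer gap, but that refinement is not needed here.
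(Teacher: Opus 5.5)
Your proposal is correct and takes essentially the same route as the paper: it treats the truncated objective as given by Assumption~\ref{assumption:inference-training-support}, observes that its optimum is zero when $\operatorname{supp}_{\epsilon_{\mathrm{inf}}}(\pi_t;x)$ is empty, and otherwise reaches the population optimum of one with a point mass on a visible correct rollout. Your explicit remarks that the gap compares optimal values over unrestricted $\pi$, and that $\mathcal{C}(x)\neq\varnothing$ is needed for the population optimum to equal one, make precise what the paper leaves implicit.
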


The proof is provided in Appendix~\ref{app:proof-finite-budget-objective}.
Lemma~\ref{lemma:finite-budget-objective} shows why a larger inference support
helps: it makes more correct rollouts visible under finite-budget sampling,
reducing the gap between the empirical and population objectives.  We next
quantify how far sparse optimal corrections can expand this support under a
mild token-level reachability assumption.

\begin{assumption}[Optimal-token reachability]
\label{assumption:optimal-token-support}
There exists $\nu>0$ such that
$\pi(y_t^\star\mid x,y_{<t})\geq\nu$ for every policy $\pi\in\Pi$, prompt
$x$, prefix $y_{<t}$, and optimal next token $y_t^\star$.
\end{assumption}

This is a simplifying reachability assumption used to obtain a uniform support envelope.
Under this assumption, the support expansion of \mintrl is characterized by
the following envelope.

\begin{theorem}[Support envelope of sparse optimal corrections]
\label{theorem:sparse-correction-support}
Fix the current policy $\pi_t$ and a prompt $x$.  Let $\mu_N$ follow $\pi_t$
except at a set of at most $N$ positions, where it emits an optimal next
token $y_t^\star$.  The visible correct support of the combined
current-policy and corrected rollout pool is
\begin{equation}
    \mathcal{S}_{\epsilon_{\mathrm{inf}},N}(\pi_t,\mu_N;x)
    =
    \operatorname{supp}_{\epsilon_{\mathrm{inf}}}(\pi_t;x)
    \cup
    \operatorname{supp}_{\epsilon_{\mathrm{inf}}}(\mu_N;x).
    \label{eq:mixed-visible-support}
\end{equation}
Under Assumption~\ref{assumption:optimal-token-support},
\begin{equation}
    \operatorname{supp}_{\epsilon_{\mathrm{inf}}}(\pi_t;x)
    \subseteq
    \mathcal{S}_{\epsilon_{\mathrm{inf}},N}(\pi_t,\mu_N;x)
    \subseteq
    \operatorname{supp}_{\epsilon_{\mathrm{inf}}\nu^N}(\pi_t;x).
    \label{eq:sparse-correction-support-envelope}
\end{equation}
\end{theorem}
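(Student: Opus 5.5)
The left inclusion is immediate from the definition of $\mathcal{S}_{\epsilon_{\mathrm{inf}},N}$ in \eqref{eq:mixed-visible-support} as a union containing $\operatorname{supp}_{\epsilon_{\mathrm{inf}}}(\pi_t;x)$. The work is the right inclusion. Since $\nu^N\le 1$, we have $\epsilon_{\mathrm{inf}}\nu^N\le\epsilon_{\mathrm{inf}}$, so $\operatorname{supp}_{\epsilon_{\mathrm{inf}}}(\pi_t;x)\subseteq\operatorname{supp}_{\epsilon_{\mathrm{inf}}\nu^N}(\pi_t;x)$. It therefore remains to show that
\[
\operatorname{supp}_{\epsilon_{\mathrm{inf}}}(\mu_N;x)\subseteq\operatorname{supp}_{\epsilon_{\mathrm{inf}}\nu^N}(\pi_t;x).
\]

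Fix a correct $y\in\mathcal{C}(x)$ with $\mu_N(y\mid x)>\epsilon_{\mathrm{inf}}$. We factor both sequence probabilities autoregressively over positions $s=1,\dots,|y|$. At every position outside the correction set $I$ (with $|I|\le N$), $\mu_N(y_s\mid x,y_{<s})=\pi_t(y_s\mid x,y_{<s})$ by construction, so these factors cancel in the ratio. At a corrected position $s\in I$, $\mu_N$ emits the optimal token $y_s^\star$ deterministically, so $\mu_N(y_s\mid x,y_{<s})\le 1$; if $y_s\neq y_s^\star$ then $\mu_N(y\mid x)=0$, contradicting $y\in\operatorname{supp}_{\epsilon_{\mathrm{inf}}}(\mu_N;x)$. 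Hence $y_s=y_s^\star$, and Assumption~\ref{assumption:optimal-token-support} applied to $\pi_t\in\Pi$ gives $\pi_t(y_s^\star\mid x,y_{<s})\ge\nu$. Multiplying the factors yields
\[
\pi_t(y\mid x)=\mu_N(y\mid x)\prod_{s\in I}\frac{\pi_t(y_s\mid x,y_{<s})}{\mu_N(y_s\mid x,y_{<s})}\ge\mu_N(y\mid x)\,\nu^{|I|}>\epsilon_{\mathrm{inf}}\nu^N,
\]
using $\nu\le1$ and $|I|\le N$. Since $y$ is also correct, $y\in\operatorname{supp}_{\epsilon_{\mathrm{inf}}\nu^N}(\pi_t;x)$.

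The main obstacle is making precise which positions are corrected. The set $I$ could depend on the sampled prefix, as it would for a judge-triggered intervention, rather than being a fixed index set. I would handle this by letting $I=I(y)$ be determined along each trajectory, requiring only $|I(y)|\le N$ for every $y$. The factorization above is pathwise, so it goes through unchanged. If the correction is stochastic but supported on optimal tokens, the same bound holds because each corrected factor of $\mu_N$ is still at most $1$.
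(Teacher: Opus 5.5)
Your proof is correct and follows the same route as the paper. Both arguments factor the rollout probability autoregressively and pathwise over the trajectory-dependent correction set $I(y)$, then bound each corrected factor's likelihood ratio below by $\nu$, which gives $\pi_t(y\mid x)\ge\nu^{N}\mu_N(y\mid x)$. You are slightly more careful than the paper: its proof only bounds $\operatorname{supp}_{\epsilon_{\mathrm{inf}}}(\mu_N;x)$ and leaves implicit the observation $\nu^N\le 1$, which is needed to place the $\operatorname{supp}_{\epsilon_{\mathrm{inf}}}(\pi_t;x)$ part of the union inside $\operatorname{supp}_{\epsilon_{\mathrm{inf}}\nu^N}(\pi_t;x)$.
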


The proof is provided in Appendix~\ref{app:proof-sparse-correction-support}.

The left inclusion preserves the current policy's visible support, while the
right inclusion shows that $N$ corrections can expose rollouts up to a factor
of $(1/\nu)^N$ less likely under $\pi_t$.  This expansion can provide the
positive signal missing in Lemma~\ref{lemma:finite-budget-objective}.
However, the theorem guarantees that corrected rollouts remain within
training support only while
$\epsilon_{\mathrm{inf}}\nu^N\geq\epsilon_{\mathrm{train}}$.  Beyond this
range, the expanded support may include rollouts from which the policy can no
longer reliably learn.  Thus, $N$ trades broader coverage against the
learnability boundary set by $\epsilon_{\mathrm{train}}$.

\section{Experiments}
\label{sec:experiments}
\paragraph{Experimental setup.}
In the main experiments, we use Qwen3-1.7B and Qwen3-4B, both operated
in non-thinking mode, as the policy models \citep{yang2025qwen3}, and
Qwen3-4B-Instruct-2507 as the judge--intervention model.  
We study both math reasoning and code generation.  
For math, we use a subset of the AceReason-Nemotron training prompts \citep{chen2026acereason}, filtering out overly easy problems to avoid spending rollout compute on zero advantage examples.  
For code, we use the DeepCoder-Preview training set \citep{luo2025deepcoder}.  

The semi-on-policy rollout generation is controlled by three hyperparameters: 1). \emph{chunk size}, which determines how many tokens $\pi_t$ generates between reviews; 2). \emph{number of judge reviews}, which limits how many times $\pi_{\mathrm{JI}}$ can inspect a trajectory; and 3). \emph{intervention continuation length}, which bounds the number of tokens generated by $\pi_{\mathrm{JI}}$ at each intervention.
We use $(512,4,64)$ for math and $(128,4,64)$ for code, respectively.
We implement two variants of the algorithm: MInTRL-Proxy retains the current-policy log probabilities as anchors for intervention-authored tokens, whereas MInTRL-Const replaces them with the constant anchor in \eqref{eq:mintrl-intervention-anchor}.
%For the relaxed anchor constant in \eqref{eq:mintrl-intervention-anchor}, we set $\kappa = -0.2$ for math and $\kappa = -0.1$ for code.
%More hyperparameter settings are provided in
%Appendix~\ref{app:experimental-setups}.

We evaluate math reasoning on
AIME 2025 \citep{balunovic2026matharena}, AIME 2026
\citep{dekoninck2026beyond}, and HMMT February 2025
\citep{balunovic2026matharena}, and code generation on LiveCodeBench
\citep{jain2025livecodebench}, HumanEval+ \citep{liu2023your}, and MBPP+
\citep{liu2023your}.  
%At evaluation time, we sample with temperature $1.0$, top-$p$ of $1.0$, and a maximum generation length of $32{,}768$ tokens.
For every problem, we draw 32 samples and report their mean correctness, which estimates Pass@1.
More hyperparameter settings and implementation details are provided in
Appendix~\ref{app:experimental-setups}.

\paragraph{Baselines.}
We compare \mintrl against four categories of baselines:
\begin{enumerate}
    \item \textbf{Reference baselines.} The unmodified base policy and standard
    on-policy GRPO \citep{shao2024deepseekmath}, which represent no post-training and
    conventional on-policy RL, respectively.
    \item \textbf{Off-policy baselines.} SFT+GRPO first fine-tunes the policy
    on complete trajectories sampled from $\pji$, and then continues training
    with on-policy GRPO.
    \item \textbf{On-policy baselines.} Standard OPD
    \citep{agarwal2024onpolicy,lu2025onpolicydistillation} performs token-level
    distillation on trajectories sampled from the policy itself.
    \item \textbf{Intervention-based baseline.} MENTOR
    \citep{jiang2026selective} selectively injects expert guidance into rollout
    generation and optimizes the resulting trajectories with a GRPO-style
    objective, making it the closest existing intervention-based RL method to
    \mintrl.
\end{enumerate}
Together, these baselines provide broad coverage of representative on-policy and off-policy post-training methods, enabling a comprehensive comparison with \mintrl.

\subsection{Overall results}
As shown in Table~\ref{tab:main-results}, \mintrl achieves the best overall performance across both model scales and domains, outperforming standard on-policy RL, distillation-based methods, and teacher-guided baselines.
For Qwen3-1.7B, MInTRL-Const reaches average scores of $35.45$ on mathematics and $61.95$ on code.
Compared with the stronger of GRPO and OPD, this corresponds to gains of $+13.61$ and $+14.12$ percentage points (pp), respectively, with improvements on every individual benchmark.
The gains remain substantial at the 4B scale: MInTRL-Const achieves $55.73$ on mathematics and $72.63$ on code, improving over the two baselines by $+3.02$ pp and $+6.80$ pp, respectively.
Moreover, MInTRL-Const generally outperforms the theoretically motivated MInTRL-Proxy, achieving higher aggregate performance across both domains and model scales.
We hypothesize that this is because, under relatively weak policies, useful intervention tokens can have extremely low probability under the current policy and are therefore overly constrained by the policy-dependent anchor used in MInTRL-Proxy.
Overall, these results demonstrate that sparse local interventions can effectively leverage off-policy guidance while preserving the benefits of predominantly on-policy rollout generation.

\begin{table}[t]
    \centering
    \caption{Performance on math and code benchmarks.\tablefootnote{All values come from the checkpoint with the highest three-benchmark average. Final-checkpoint results are reported in Appendix~\ref{app:fixed-checkpoint-comparison}.}\tablefootnote{A recent method combining intervention with OPD \citep{xu2026relayopd} reports scores of 32.81 and 30.52 on AIME25 and AIME26, respectively. We did not reproduce these results locally; both reported scores are below those of our algorithm.}}
    \label{tab:main-results}
    \resizebox{\textwidth}{!}{%
    \begin{tabular}{lrrrrrrrr}
        \toprule
        & \multicolumn{4}{c}{Mathematics} & \multicolumn{4}{c}{Code} \\
        \cmidrule(lr){2-5}\cmidrule(l){6-9}
        Method & AIME25 & AIME26 & HMMT25 & Avg. & LCB & HE+ & MBPP+ & Avg. \\
        \midrule
        \multicolumn{9}{l}{\textit{Qwen3-4B-Instruct-2507}} \\ 
        & 47.50 & 53.02 & 28.65 & 43.06 & 35.65 & 82.34 & 71.58 & 63.19 \\
        \midrule
        \multicolumn{9}{l}{\textit{Qwen3-1.7B}} \\
        Base        & 8.85 & 8.85 & 4.48 & 7.40 & 12.99 & 55.89 & 52.18 & 40.35 \\
        GRPO        & 22.08 & 16.46 & 14.69 & 17.74 & 21.74 & 64.63 & 54.65 & 47.01 \\
        OPD         & 25.73 & 24.27 & 15.52 & 21.84 & 24.13 & 63.66 & 55.70 & 47.83 \\
        MENTOR      & 31.04 & 28.75 & 18.23 & 26.01 & 33.46 & 62.80 & 56.02 & 50.76 \\
        SFT+GRPO    & 31.56 & 27.60 & 16.46 & 25.21 & \underline{34.20} & \underline{78.28} & \underline{67.15} & \underline{59.88} \\
        MInTRL-Proxy & \underline{36.77} & \underline{32.50} & \underline{19.58} & \underline{29.62} & 32.84 & 73.95 & 60.09 & 55.63 \\
        MInTRL-Const & \textbf{40.73} & \textbf{41.56} & \textbf{24.06} & \textbf{35.45} & \textbf{37.85} & \textbf{80.81} & \textbf{67.20} & \textbf{61.95} \\
        \midrule
        \multicolumn{9}{l}{\textit{Qwen3-4B}} \\
        Base        & 17.40 & 15.10 & 11.77 & 14.76 & 23.34 & 72.71 & 61.40 & 52.48 \\
        GRPO        & 60.83 & 60.31 & 36.98 & 52.71 & 50.06 & 80.32 & 67.13 & 65.83 \\
        OPD         & 43.02 & 50.10 & 28.44 & 40.52 & 35.11 & 77.31 & 63.98 & 58.80 \\
        MENTOR      & \underline{61.88} & \textbf{65.00} & \textbf{39.17} & \underline{55.35} & \underline{53.20} & 86.38 & 75.83 & 71.80 \\
        SFT+GRPO    & 57.60 & 63.65 & 34.27 & 51.84 & 49.25 & \underline{87.21} & \underline{75.98} & 70.81 \\
        MInTRL-Proxy & 57.60 & 58.23 & \underline{37.71} & 51.18 & 52.68 & 86.79 & \textbf{76.20} & \underline{71.89} \\
        MInTRL-Const & \textbf{65.31} & \underline{64.17} & \underline{37.71} & \textbf{55.73} & \textbf{54.19} & \textbf{87.84} & 75.85 & \textbf{72.63} \\
        \bottomrule
    \end{tabular}
    }
\end{table}

\begin{comment}
            \addlinespace[1pt]
        {\itshape\color{black!60} $\Delta$ vs GRPO/OPD}
        & \gain{15.00} & \gain{17.29} & \gain{8.54} & \gain{13.61}
        & \gain{13.72} & \gain{16.18} & \gain{11.50} & \gain{14.12} \\

                        \addlinespace[1pt]
        {\itshape\color{black!60} $\Delta$ vs GRPO/OPD}
        & \gain{4.48} & \gain{3.86} & \gain{0.73} & \gain{3.02}
        & \gain{4.13} & \gain{7.52} & \gain{8.72} & \gain{6.80} \\
\end{comment}

\begin{figure*}[!t]
    \centering
    \includegraphics[width=0.76\textwidth]{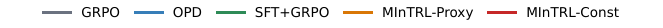}\\[-0.5em]
    \begin{minipage}[t]{0.326\textwidth}
        \centering
        \includegraphics[width=\linewidth]{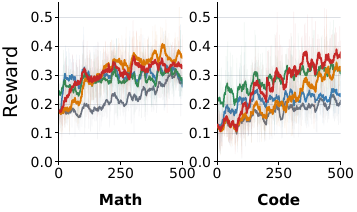}
    \end{minipage}\hfill%
    \begin{minipage}[t]{0.326\textwidth}
        \centering
        \includegraphics[width=\linewidth]{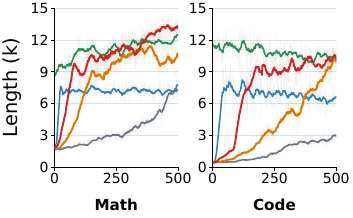}
    \end{minipage}\hfill%
    \begin{minipage}[t]{0.326\textwidth}
        \centering
        \includegraphics[width=\linewidth]{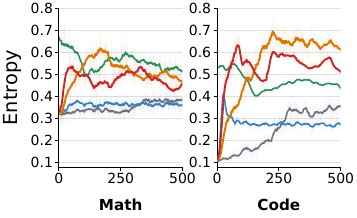}
    \end{minipage}
    \par\vspace{0.15em}
    \begin{minipage}[t]{0.326\textwidth}
        \centering
        {\small\textbf{(a)} Training reward.\par}
    \end{minipage}\hfill%
    \begin{minipage}[t]{0.326\textwidth}
        \centering
        {\small\textbf{(b)} Response length.\par}
    \end{minipage}\hfill%
    \begin{minipage}[t]{0.326\textwidth}
        \centering
        {\small\textbf{(c)} Entropy.\par}
    \end{minipage}
    \caption{\textbf{Training dynamics.}
    \mintrl improves reward while maintaining longer responses and higher empirical token entropy throughout training.}
    \label{fig:training-dynamics}
    \par\vspace{0.25em}
    \begin{minipage}[t]{0.49\textwidth}
        \centering
        \includegraphics[width=\linewidth]{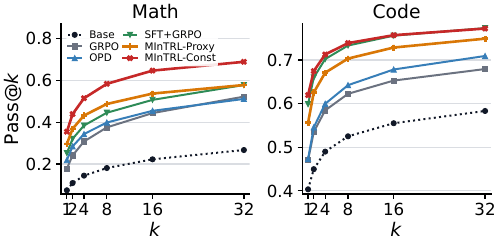}
    \end{minipage}\hfill%
    \begin{minipage}[t]{0.49\textwidth}
        \centering
        \includegraphics[width=\linewidth]{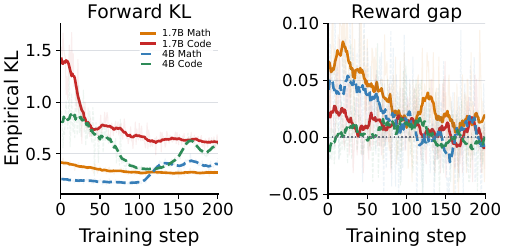}
    \end{minipage}
    \par\vspace{0.08em}
    \begin{minipage}[t]{0.49\textwidth}
        \centering
        {\small\textbf{(a)} Pass@$k$ performance.\par}
    \end{minipage}\hfill%
    \begin{minipage}[t]{0.49\textwidth}
        \centering
        {\small\textbf{(b)} Acquisition of intervention knowledge.\par}
    \end{minipage}
    \caption{\textbf{Inference-time scaling and intervention knowledge.}
    Panel (a) reports domain-averaged Pass@$k$ on evaluation benchmarks. Panel (b) shows the empirical
    forward KL on intervention tokens and the reward gap between semi-on-policy rollouts and on-policy rollouts}
    \label{fig:pass-at-k}
    \label{fig:intervention-knowledge}
\end{figure*}

\subsection{Training Dynamics}
Having established the overall performance gains of \mintrl, we next investigate how these gains develop throughout training.
We analyze the effects of \mintrl on policy exploration and diversity, dependence on intervention, and observable reasoning behavior.
Unless otherwise noted, the following training-dynamics analyses are based on the 1.7B experiments reported above.

\paragraph{MInTRL promotes exploration and policy diversity.}
As shown in Figure~\ref{fig:training-dynamics}, \mintrl achieves higher training reward while maintaining substantially higher policy entropy than GRPO and OPD.
It also ultimately surpasses SFT+GRPO in training reward, despite relying only on sparse local interventions rather than full teacher-generated trajectories.
Figure~\ref{fig:intervention-knowledge}(a) further shows that both \mintrl variants outperform GRPO in Pass@$k$ across the evaluated sampling budgets in both domains, with gains persisting at $k=32$.
Together, these results suggest that \mintrl improves access to successful reasoning paths while preserving policy diversity.
Local corrections introduce promising prefixes that are difficult for the current policy to reach through on-policy sampling alone.
By returning control to the policy after each correction, \mintrl allows it to explore its own continuations from these prefixes, combining external guidance with continued policy exploration.

\paragraph{MInTRL progressively reduces reliance on intervention.}
Figure~\ref{fig:intervention-knowledge}(b) provides two complementary signals that the policy gradually learns from $\pji$ during training.
First, the empirical forward KL measured on intervention tokens generally decreases from its initial level, suggesting closer alignment between the policy and $\pji$ on these tokens.
Second, the reward gap between semi-on-policy and purely on-policy rollouts progressively narrows toward zero.
Interventions provide a clear reward improvement early in training, but offer diminishing additional benefits as training progresses.
These trends suggest that the policy increasingly incorporates useful intervention guidance into its own generation.

\paragraph{MInTRL changes observable reasoning behavior.}
\begin{wrapfigure}{r}{0.48\linewidth}
    \centering
    \vspace{-1.0em}
    \includegraphics[width=\linewidth]{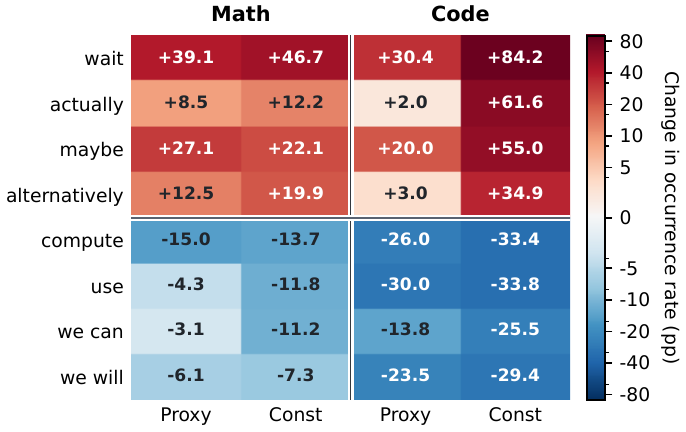}
    \caption{\textbf{Keyword shifts relative to GRPO.}
    Reconsideration and branching markers (\emph{wait}, \emph{actually},
    \emph{maybe}, and \emph{alternatively}) increase, whereas early execution
    and commitment markers (\emph{compute}, \emph{use}, \emph{we can}, and
    \emph{we will}) decrease.}
    \label{fig:reasoning-behavior-keywords}
    \vspace{-0.8em}
\end{wrapfigure}

We next examine whether MInTRL affects not only solution quality but also observable reasoning behavior.
We compare evaluation rollouts from GRPO and \mintrl variants across both math and code.
For a fair comparison, we restrict the analysis to the first 512 narrative words of each response, and measure how frequently a set of reflection- and reasoning-related keywords appears within this fixed window.

Figure~\ref{fig:reasoning-behavior-keywords} reveals a consistent shift in observable reasoning behavior across both domains.
Relative to GRPO, MInTRL increases the occurrence of reconsideration and branching keywords, while reducing execution- and commitment-oriented keywords.
For example, under MInTRL-Const on code, occurrences of \emph{wait}, \emph{actually}, and \emph{maybe} increase by 55.0--84.2 pp relative to GRPO, while those of the execution-oriented keywords \emph{compute}, \emph{use}, \emph{we can}, and \emph{we will} decrease by 25.5--33.8 pp.
MInTRL-Proxy exhibits the same directional pattern, although with smaller changes in keyword frequency.
This smaller shift is consistent with MInTRL-Proxy's stronger policy anchor, as discussed in Section~\ref{sec:practical-implementation}.
Several qualitative examples in Appendix~\ref{app:qualitative-intervention-examples} further illustrate how interventions introduce explicit reflection cues into rollouts.
Together, these shifts suggest that MInTRL induces a more reflective and extended observable reasoning pattern, with more frequent reconsideration before committing to a particular execution path.

\section{Ablation}

\subsection{MInTRL with Self Judge-Intervention}
\label{sec:self-intervention}

\begin{wraptable}{r}{0.52\linewidth}
    \centering
    \vspace{-1.0em}
    \caption{
    {Performance with self judge-intervention.}}
    \label{tab:self-judge-pass1}
    \footnotesize
    \setlength{\tabcolsep}{2.8pt}
    \renewcommand{\arraystretch}{1.05}
    \begin{tabular}{lrrrr}
        \toprule
        Method & AIME25 & AIME26 & HMMT25 & Avg. \\
        \midrule
        GRPO        & 50.63 & 58.13 & \underline{33.02} & 47.26 \\
        OPSD        & 47.71 & 55.73 & 31.67 & 45.03 \\
        MInTRL-Proxy & \textbf{59.06} & \textbf{61.56} & \textbf{36.25} & \textbf{52.29} \\
        MInTRL-Const & \underline{55.10} & \underline{58.75} & 31.56 & \underline{48.47} \\
        \bottomrule
    \end{tabular}
    \vspace{-0.8em}
\end{wraptable}

In the above section, the judge-intervention policy is instantiated by a model that is stronger than the policy being trained.
We next examine whether this assumption is necessary.
Instead of using a separate stronger teacher, we construct $\pi_{\mathrm{JI}}$ from the base policy $\pi_0$ itself, while providing it with additional privileged context.
Following recent context-conditioned on-policy self-distillation (OPSD) approaches
\citep{zhao2026self,shenfeld2026self}, for each $x$, we construct an auxiliary context $z$ that provides additional information unavailable to the policy during normal generation.
We then define the judge-intervention policy as $\pi_{\mathrm{JI}}(\cdot\mid x)=\pi_0(\cdot\mid x,z)$, while the policy being trained only conditions on the original input $x$.
In this ablation, we use Qwen3-4B-Instruct-2507 for both roles.
The results are shown in Table~\ref{tab:self-judge-pass1}.
Both MInTRL variants outperform the corresponding baselines on average.
Interestingly, MInTRL-Proxy outperforms MInTRL-Const in this setting, achieving an average score of $52.29$ versus $48.47$, in contrast to the trend observed in Table~\ref{tab:main-results}.
We hypothesize that this is because Qwen3-4B-Instruct-2507 provides a substantially stronger base policy, which already assigns reasonable probability to many intervention-authored tokens.
Consequently, relaxing their reference probabilities with the constant anchor becomes less necessary, and retaining the original policy probabilities provides a more stable regularization signal.

\FloatBarrier

\subsection{Off-policy intensity}
\label{sec:intervention-intensity}

\begin{wrapfigure}{r}{0.56\linewidth}
    \centering
    \vspace{-1.0em}
    \includegraphics[
        width=\linewidth
    ]{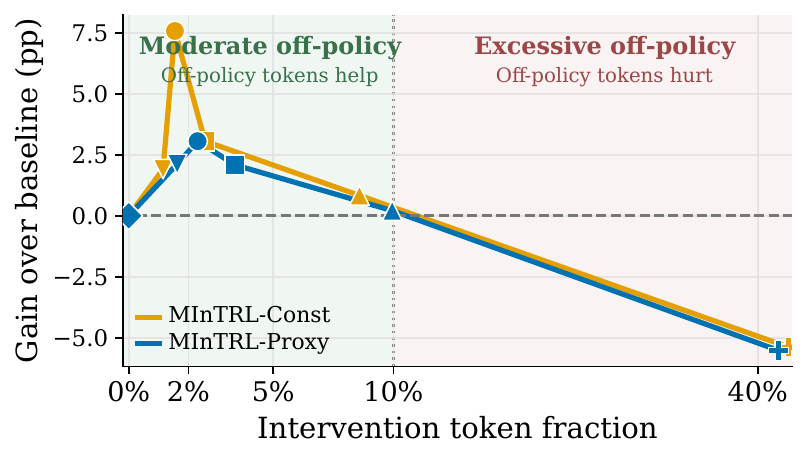}
    \caption{\textbf{Performance peaks at moderate off-policy intensity.}
    Moderate off-policy guidance improves performance, whereas excessive
    off-policy content erodes the gain and can even underperform the baseline.}
    \label{fig:intervention-intensity}
    \vspace{-0.8em}
\end{wrapfigure}

We next study how the amount of off-policy intervention affects RL performance.
We vary the intervention intensity by changing the maximum number of judge reviews and the maximum intervention continuation length, which together control the fraction of rollout tokens authored by $\pi_{\mathrm{JI}}$.
Figure~\ref{fig:intervention-intensity} plots the Pass@1 improvement over the no-intervention baseline\footnote{The 0\% intervention condition is a pure on-policy control trained with the same advantage-regression objective and training setup, with intervention disabled.} against the resulting off-policy token fraction, averaged over training.
Both MInTRL variants exhibit a clear non-monotonic trend.
Performance initially improves as a small amount of off-policy guidance is introduced, but degrades once interventions become too frequent or too long.
The best performance is achieved at a moderate off-policy fraction of roughly 2--4\%, whereas excessive intervention eventually underperforms the purely on-policy baseline.
This inverted-U trend is consistent with the coverage--learnability trade-off analyzed in Section~\ref{sec:coverage-learnability}: moderate intervention expands rollout coverage, whereas excessive off-policy content makes the resulting trajectories harder for the current policy to learn from.

\FloatBarrier

\subsection{Robustness to the Judge--Intervention Model}
\label{sec:judge-intervention-robustness}

Our main experiments instantiate the judge--intervention policy with Qwen3-4B-Instruct-2507. 
To assess whether \mintrl depends on this particular model, we replace it with DeepSeek-V4-Flash, a strong model from a different model family.
As shown in Appendix~\ref{app:judge-intervention-robustness}, \mintrl remains effective with DeepSeek-V4-Flash as the judge--intervention model, consistently outperforming the GRPO and OPD baselines across policy scales and task domains.
Interestingly, using a stronger standalone judge does not consistently yield better downstream performance. 
Although DeepSeek-V4-Flash improves some configurations, the gains vary across model scales, domains, and anchoring variants. 
We hypothesize that stronger corrections may also induce greater policy mismatch: compared with Qwen3-4B-Instruct-2507, DeepSeek-V4-Flash may generate intervention tokens that are less likely under the Qwen policy, making these corrections more difficult to learn from despite the judge's higher standalone quality.

\section{Conclusion}
\label{sec:conclusion}

We introduced \mintrl, which augments otherwise on-policy rollouts with sparse, local interventions and learns from the resulting trajectories using a regression-based RL objective.
Our theoretical and empirical results reveal a coverage--learnability trade-off: moderate intervention exposes useful reasoning paths beyond ordinary on-policy sampling, whereas excessive off-policy content makes the resulting trajectories harder to learn from.
Across mathematical reasoning and code generation, \mintrl consistently improves over on-policy RL and distillation baselines, suggesting that carefully controlled off-policy experience can expand exploration without sacrificing policy proximity.

\bibliography{iclr2027_conference}

@inproceedings{agarwal2024onpolicy,
  title={On-policy distillation of language models: Learning from self-generated mistakes},
  author={Agarwal, Rishabh and Vieillard, Nino and Zhou, Yongchao and Stanczyk, Piotr and Ramos Garea, Sabela and Geist, Matthieu and Bachem, Olivier},
  booktitle={International Conference on Learning Representations},
  volume={2024},
  pages={21246--21263},
  year={2024}
}

@inproceedings{jiang2026selective,
  title={Selective expert guidance for effective and diverse exploration in reinforcement learning of llms},
  author={Jiang, Zishang and Han, Jinyi and Wang, Xinyi and Jiang, Sihang and Dai, Zhaoqian and Shuguang, Ma and Yu, Fei and Liang, Jiaqing and Xiao, Yanghua and others},
  booktitle={International Conference on Learning Representations},
  volume={2026},
  pages={62980--63006},
  year={2026}
}

@article{lu2025onpolicydistillation,
  author = {Kevin Lu and Thinking Machines Lab},
  title = {On-Policy Distillation},
  journal = {Thinking Machines Lab: Connectionism},
  year = {2025},
  note = {https://thinkingmachines.ai/blog/on-policy-distillation},
  doi = {10.64434/tml.20251026},
}

@article{zhao2026self,
  title={Self-distilled reasoner: On-policy self-distillation for large language models},
  author={Zhao, Siyan and Xie, Zhihui and Liu, Mengchen and Huang, Jing and Pang, Guan and Chen, Feiyu and Grover, Aditya},
  journal={arXiv preprint arXiv:2601.18734},
  year={2026}
}

@article{shenfeld2026self,
  title={Self-distillation enables continual learning},
  author={Shenfeld, Idan and Damani, Mehul and H{\"u}botter, Jonas and Agrawal, Pulkit},
  journal={arXiv preprint arXiv:2601.19897},
  year={2026}
}

@article{balunovic2026matharena,
  title={Matharena: Evaluating llms on uncontaminated math competitions},
  author={Balunovic, Mislav and Dekoninck, Jasper and Petrov, Ivo and Jovanovi{\'c}, Nikola and Vechev, Martin},
  journal={Advances in Neural Information Processing Systems},
  volume={38},
  year={2026}
}

@article{chen2026acereason,
  title={Acereason-nemotron: Advancing math and code reasoning through reinforcement learning},
  author={Chen, Yang and Yang, Zhuolin and Liu, Zihan and Lee, Chankyu and Xu, Peng and Shoeybi, Mohammad and Catanzaro, Bryan and Ping, Wei},
  journal={Advances in neural information processing systems},
  volume={38},
  pages={110320--110345},
  year={2026}
}

@article{dekoninck2026beyond,
  title={Beyond benchmarks: Matharena as an evaluation platform for mathematics with llms},
  author={Dekoninck, Jasper and Jovanovi{\'c}, Nikola and Gehrunger, Tim and R{\"o}gnvaldsson, K{\'a}ri and Petrov, Ivo and Sun, Chenhao and Vechev, Martin},
  journal={arXiv preprint arXiv:2605.00674},
  year={2026}
}

@inproceedings{jain2025livecodebench,
  title={Livecodebench: Holistic and contamination free evaluation of large language models for code},
  author={Jain, Naman and Gu, Alex and Li, Wen-Ding and Yan, Fanjia and Zhang, Tianjun and Wang, Sida and Solar-Lezama, Armando and Sen, Koushik and Stoica, Ion},
  booktitle={International Conference on Learning Representations},
  volume={2025},
  pages={58791--58831},
  year={2025}
}

@article{liu2023your,
  title={Is your code generated by chatgpt really correct? rigorous evaluation of large language models for code generation},
  author={Liu, Jiawei and Xia, Chunqiu Steven and Wang, Yuyao and Zhang, Lingming},
  journal={Advances in neural information processing systems},
  volume={36},
  pages={21558--21572},
  year={2023}
}

@article{luo2025deepcoder,
  title={Deepcoder: A fully open-source 14b coder at o3-mini level},
  author={Luo, Michael and Tan, Sijun and Huang, Roy and Patel, Ameen and Ariyak, Alpay and Wu, Qingyang and Shi, Xiaoxiang and Xin, Rachel and Cai, Colin and Weber, Maurice and others},
  journal={Notion Blog},
  volume={1},
  year={2025}
}

@article{yang2025qwen3,
  title={Qwen3 technical report},
  author={Yang, An and Li, Anfeng and Yang, Baosong and Zhang, Beichen and Hui, Binyuan and Zheng, Bo and Yu, Bowen and Gao, Chang and Huang, Chengen and Lv, Chenxu and others},
  journal={arXiv preprint arXiv:2505.09388},
  year={2025}
}

@article{brantley2026accelerating,
  title={Accelerating rl for llm reasoning with optimal advantage regression},
  author={Brantley, Kiant{\'e} and Chen, Mingyu and Gao, Zhaolin and Lee, Jason and Sun, Wen and Zhan, Wenhao and Zhang, Xuezhou},
  journal={Advances in Neural Information Processing Systems},
  volume={38},
  pages={151492--151531},
  year={2026}
}

@article{hinton2015distilling,
  title={Distilling the knowledge in a neural network},
  author={Hinton, Geoffrey and Vinyals, Oriol and Dean, Jeff},
  journal={arXiv preprint arXiv:1503.02531},
  year={2015}
}

@article{guo2025deepseek,
  title={Deepseek-r1: Incentivizing reasoning capability in llms via reinforcement learning},
  author={Guo, Daya and Yang, Dejian and Zhang, Haowei and Song, Junxiao and Wang, Peiyi and Zhu, Qihao and Xu, Runxin and Zhang, Ruoyu and Ma, Shirong and Bi, Xiao and others},
  journal={arXiv preprint arXiv:2501.12948},
  year={2025}
}

@article{kimi2025k15,
  title={Kimi k1. 5: Scaling reinforcement learning with llms},
  author={Team, Kimi and Du, Angang and Gao, Bofei and Xing, Bowei and Jiang, Changjiu and Chen, Cheng and Li, Cheng and Xiao, Chenjun and Du, Chenzhuang and Liao, Chonghua and others},
  journal={arXiv preprint arXiv:2501.12599},
  year={2025}
}

@article{kimik2,
  title={Kimi k2: Open agentic intelligence},
  author={Team, Kimi and Bai, Yifan and Bao, Yiping and Charles, Y and Chen, Cheng and Chen, Guanduo and Chen, Haiting and Chen, Huarong and Chen, Jiahao and Chen, Ningxin and others},
  journal={arXiv preprint arXiv:2507.20534},
  year={2025}
}

@article{hou2026single,
  title={Single-Rollout Asynchronous Optimization for Agentic Reinforcement Learning},
  author={Hou, Zhenyu and Li, Yujiang and Tang, Jie and Dong, Yuxiao},
  journal={arXiv preprint arXiv:2607.07508},
  year={2026}
}

@article{ritter2026llms,
  title={Llms can learn to reason via off-policy rl},
  author={Ritter, Daniel and Oertell, Owen and Guo, Bradley and Chang, Jonathan and Brantley, Kiant{\'e} and Sun, Wen},
  journal={arXiv preprint arXiv:2602.19362},
  year={2026}
}

@article{sakhi2026pessimistic,
  title={Off-Policy Learning to Reason Works Because It Is More Pessimistic Than You Think},
  author={Sakhi, Otmane and Arzhantsev, Aleksei and Aouali, Imad and Vasile, Flavian},
  journal={arXiv preprint arXiv:2605.28150},
  year={2026}
}

@article{schulman2017ppo,
  title={Proximal policy optimization algorithms},
  author={Schulman, John and Wolski, Filip and Dhariwal, Prafulla and Radford, Alec and Klimov, Oleg},
  journal={arXiv preprint arXiv:1707.06347},
  year={2017}
}

@article{shao2024deepseekmath,
  title={Deepseekmath: Pushing the limits of mathematical reasoning in open language models},
  author={Shao, Zhihong and Wang, Peiyi and Zhu, Qihao and Xu, Runxin and Song, Junxiao and Bi, Xiao and Zhang, Haowei and Zhang, Mingchuan and Li, YK and Wu, Yang and others},
  journal={arXiv preprint arXiv:2402.03300},
  year={2024}
}

@inproceedings{zheng2026prosperity,
  title={Prosperity before Collapse: How Far Can Off-Policy RL Reach with Stale Data on LLMs?},
  author={Zheng, Haizhong and Zhao, Jiawei and Chen, Beidi},
  booktitle={International Conference on Learning Representations},
  volume={2026},
  pages={82657--82679},
  year={2026}
}

@article{xu2026relayopd,
  title   = {Pass the Baton: Trajectory-Relayed On-Policy Distillation},
  author  = {Xu, Haolei and Xu, Xiaowen and Hong, Haiwen and Ni, Zixuan and Li, Hongxing and Qiu, Yiwen and Lu, Weiming and Shen, Yongliang},
  journal = {arXiv preprint arXiv:2607.26057},
  year    = {2026}
}

@article{wu2025invisible,
  title   = {The Invisible Leash? Why {RLVR} May or May Not Escape Its Origin},
  author  = {Wu, Fang and Xuan, Weihao and Lu, Ximing and Liu, Mingjie and Dong, Yi and Harchaoui, Zaid and Choi, Yejin},
  journal = {arXiv preprint arXiv:2507.14843},
  year    = {2025}
}

@article{chen2026does,
  title={Does reinforcement learning really incentivize reasoning capacity in llms beyond the base model?},
  author={Chen, Zhiqi and Lu, Rui and Zhao, Andrew and Wang, Zhaokai and Yue, Yang and Song, Shiji and Huang, Gao},
  journal={Advances in Neural Information Processing Systems},
  volume={38},
  pages={57654--57689},
  year={2026}
}

@article{xie2022coverage,
  title={The role of coverage in online reinforcement learning},
  author={Xie, Tengyang and Foster, Dylan J and Bai, Yu and Jiang, Nan and Kakade, Sham M},
  journal={arXiv preprint arXiv:2210.04157},
  year={2022}
}

@article{jaech2024openai,
  title={Openai o1 system card},
  author={Jaech, Aaron and Kalai, Adam and Lerer, Adam and Richardson, Adam and El-Kishky, Ahmed and Low, Aiden and Helyar, Alec and Madry, Aleksander and Beutel, Alex and Carney, Alex and others},
  journal={arXiv preprint arXiv:2412.16720},
  year={2024}
}

@article{liu2025understanding,
  title={Understanding r1-zero-like training: A critical perspective},
  author={Liu, Zichen and Chen, Changyu and Li, Wenjun and Qi, Penghui and Pang, Tianyu and Du, Chao and Lee, Wee Sun and Lin, Min},
  journal={arXiv preprint arXiv:2503.20783},
  year={2025}
}

@article{yu2026dapo,
  title={Dapo: An open-source llm reinforcement learning system at scale},
  author={Yu, Qiying and Zhang, Zheng and Zhu, Ruofei and Yuan, Yufeng and Zuo, Xiaochen and Yue, Yu and Dai, Weinan and Fan, Tiantian and Liu, Gaohong and Liu, Lingjun and others},
  journal={Advances in Neural Information Processing Systems},
  volume={38},
  pages={113222--113244},
  year={2026}
}

@article{chen2025minimax,
  title={Minimax-m1: Scaling test-time compute efficiently with lightning attention},
  author={Chen, Aili and Li, Aonian and Gong, Bangwei and Jiang, Binyang and Fei, Bo and Yang, Bo and Shan, Boji and Yu, Changqing and Wang, Chao and Zhu, Cheng and others},
  journal={arXiv preprint arXiv:2506.13585},
  year={2025}
}

@article{yue2025vapo,
  title={Vapo: Efficient and reliable reinforcement learning for advanced reasoning tasks},
  author={Yue, Yu and Yuan, Yufeng and Yu, Qiying and Zuo, Xiaochen and Zhu, Ruofei and Xu, Wenyuan and Chen, Jiaze and Wang, Chengyi and Fan, TianTian and Du, Zhengyin and others},
  journal={arXiv preprint arXiv:2504.05118},
  year={2025}
}

@article{jin2025search,
  title={Search-r1: Training llms to reason and leverage search engines with reinforcement learning},
  author={Jin, Bowen and Zeng, Hansi and Yue, Zhenrui and Yoon, Jinsung and Arik, Sercan and Wang, Dong and Zamani, Hamed and Han, Jiawei},
  journal={arXiv preprint arXiv:2503.09516},
  year={2025}
}

@article{zeng2026glm,
  title={Glm-5: from vibe coding to agentic engineering},
  author={Zeng, Aohan and Lv, Xin and Hou, Zhenyu and Du, Zhengxiao and Zheng, Qinkai and Chen, Bin and Yin, Da and Ge, Chendi and Huang, Chenghua and Xie, Chengxing and others},
  journal={arXiv preprint arXiv:2602.15763},
  year={2026}
}

@article{lin2026scaling,
  title={Scaling In-Context Online Learning Capability of LLMs via Cross-Episode Meta-RL},
  author={Lin, Xiaofeng and Zhu, Sirou and Chen, Yilei and Chen, Mingyu and Sang, Hejian and Paschalidis, Ioannis and Wang, Zhipeng and Pacchiano, Aldo and Zhang, Xuezhou},
  journal={arXiv preprint arXiv:2602.04089},
  year={2026}
}

@inproceedings{kim2016sequence,
  title={Sequence-level knowledge distillation},
  author={Kim, Yoon and Rush, Alexander M},
  booktitle={Proceedings of the 2016 conference on empirical methods in natural language processing},
  pages={1317--1327},
  year={2016}
}

@inproceedings{hsieh2023distilling,
  title={Distilling step-by-step! outperforming larger language models with less training data and smaller model sizes},
  author={Hsieh, Cheng-Yu and Li, Chun-Liang and Yeh, Chih-Kuan and Nakhost, Hootan and Fujii, Yasuhisa and Ratner, Alex and Krishna, Ranjay and Lee, Chen-Yu and Pfister, Tomas},
  booktitle={Findings of the association for computational linguistics: ACL 2023},
  pages={8003--8017},
  year={2023}
}

@inproceedings{ho2023large,
  title={Large language models are reasoning teachers},
  author={Ho, Namgyu and Schmid, Laura and Yun, Se-Young},
  booktitle={Proceedings of the 61st annual meeting of the association for computational linguistics (volume 1: long papers)},
  pages={14852--14882},
  year={2023}
}

@inproceedings{magister2023teaching,
  title={Teaching small language models to reason},
  author={Magister, Lucie Charlotte and Mallinson, Jonathan and Adamek, Jakub and Malmi, Eric and Severyn, Aliaksei},
  booktitle={Proceedings of the 61st Annual Meeting of the Association for Computational Linguistics (Volume 2: Short Papers)},
  pages={1773--1781},
  year={2023}
}

@article{mukherjee2023orca,
  title={Orca: Progressive learning from complex explanation traces of gpt-4},
  author={Mukherjee, Subhabrata and Mitra, Arindam and Jawahar, Ganesh and Agarwal, Sahaj and Palangi, Hamid and Awadallah, Ahmed},
  journal={arXiv preprint arXiv:2306.02707},
  year={2023}
}

@article{hou2026stratified,
  title={Stratified Consistency Distillation for Natural Language Formalization},
  author={Hou, Zhichao and Erata, Ferhat and Lilien, Joe and Torkamani, MohamadAli},
  journal={arXiv preprint arXiv:2608.30258},
  year={2026}
}

@article{yan2026learning,
  title={Learning to reason under off-policy guidance},
  author={Yan, Jianhao and Li, Yafu and Hu, Zican and Wang, Zhi and Cui, Ganqu and Qu, Xiaoye and Cheng, Yu and Zhang, Yue},
  journal={Advances in Neural Information Processing Systems},
  volume={38},
  pages={117157--117186},
  year={2026}
}

@article{wu2026learn,
  title={Learn hard problems during RL with reference guided fine-tuning},
  author={Wu, Yangzhen and Li, Shanda and Wen, Zixin and Zhou, Xin and Talwalkar, Ameet and Yang, Yiming and Huang, Wenhao and Cai, Tianle},
  journal={arXiv preprint arXiv:2603.01223},
  year={2026}
}

@inproceedings{ma2026learning,
  title={Learning What Reinforcement Learning Can't: Interleaved Online Fine-Tuning for Hardest Questions},
  author={Ma, Lu and Liang, Hao and Qiang, Meiyi and Tang, Lexiang and Ma, Xiaochen and Wong, Zhen and Niu, Junbo and Shen, Chengyu and He, Runming and Li, Yanhao and others},
  booktitle={International Conference on Learning Representations},
  volume={2026},
  pages={80802--80821},
  year={2026}
}

@inproceedings{yang2026int,
  title={Int: Self-proposed interventions enable credit assignment in llm reasoning},
  author={Yang, Matthew and Bai, Hao and Wu, Ian and Yang, Gene and Setlur, Amrith and Kumar, Aviral},
  booktitle={International Conference on Learning Representations},
  volume={2026},
  pages={85054--85091},
  year={2026}
}

@inproceedings{lyu2026student,
  title={Student-Centered Distillation Narrows the Agentic Gap Between Small and Large LLMs},
  author={Lyu, Yuanjie and Wang, Chengyu and Huang, Jun and Xu, Tong},
  booktitle={Forty-third International Conference on Machine Learning},
  year={2026}
}

@article{zhang2026bread,
  title={Bread: Branched rollouts from expert anchors bridge sft \& rl for reasoning},
  author={Zhang, Xuechen and Huang, Zijian and Li, Yingcong and Ni, Chenshun and Chen, Jiasi and Oymak, Samet},
  journal={Advances in Neural Information Processing Systems},
  volume={38},
  pages={96726--96752},
  year={2026}
}

@article{han2026cliff,
  title={Cliff: Learning Process Rewards from the First Mistake},
  author={Han, Peixuan and Wang, Runhui and Ramaneti, Ketan and Hao, Jie and Friedland, Gerald and Kong, Chris},
  journal = {arXiv preprint arXiv:2609.02817},
  year    = {2026}
}
\bibliographystyle{iclr2027_conference}

\clearpage
\appendix

\section{Related works}

\subsection{RLVR and LLM reasoning}
Reinforcement learning with verifiable rewards has become a central approach to post-training large language models for complex reasoning, particularly in mathematics and code, where solution correctness can be automatically verified \citep{shao2024deepseekmath, jaech2024openai, guo2025deepseek}.
Most RLVR methods build on policy-gradient algorithms such as PPO \citep{schulman2017ppo} and GRPO \citep{shao2024deepseekmath}.
Subsequent work has refined the optimization objectives and training procedures underlying RLVR to address objective bias, unstable policy updates, and training inefficiency.
Representative methods include Dr.~GRPO \citep{liu2025understanding}, DAPO \citep{yu2026dapo}, CISPO \citep{chen2025minimax}, VAPO \citep{yue2025vapo}, and A*PO \citep{brantley2026accelerating}.
Beyond single-turn reasoning, RLVR has also been extended to agentic settings, where models invoke tools, interact with external environments over long horizons, and receive verifiable feedback from task execution \citep{jin2025search, zeng2026glm, lin2026scaling, hou2026single}.

\subsection{Knowledge distillation and on-policy distillation}
Knowledge distillation (KD) transfers knowledge from a teacher model to a student by matching the teacher's predictive distributions or imitating its generated outputs \citep{hinton2015distilling, kim2016sequence}.
For LLMs, teacher-generated responses, reasoning traces, and pseudo-labels have been widely used to transfer instruction-following and reasoning capabilities to smaller models \citep{hsieh2023distilling, ho2023large, mukherjee2023orca, magister2023teaching, hou2026stratified}.
However, distillation based on teacher-generated trajectories is off-policy with respect to the student and can introduce a mismatch between its training and inference distributions.
On-policy distillation mitigates this mismatch by training on student-generated trajectories with teacher supervision at the visited prefixes.
GKD formalizes this paradigm \citep{agarwal2024onpolicy}, which has since been adopted at scale in reasoning models such as Qwen3 \citep{yang2025qwen3} and explored as an efficient alternative to RL \citep{lu2025onpolicydistillation}.
Recent work further extends this paradigm to self-distillation, where privileged context enables the same model to serve as both teacher and student \citep{zhao2026self, shenfeld2026self}.

\subsection{Learning from off-policy experience}
External experience can expose models to reasoning paths that are difficult to discover through their own sampling.
One line of work incorporates such experience through supervised learning.
ReGFT constructs reference-guided trajectories for SFT before RL \citep{wu2026learn}, while ReLIFT interleaves supervised updates on difficult examples with RL training \citep{ma2026learning}.
More localized approaches, including InT \citep{yang2026int} and SCoRe \citep{lyu2026student}, identify and correct the first error in student-generated trajectories, using the resulting experience for supervised training before subsequent RL.

Another line of work incorporates external guidance directly into RL.
LUFFY combines expert demonstrations with on-policy rollouts and applies policy shaping to facilitate learning from off-policy traces \citep{yan2026learning}.
\citet{zhang2026bread} use contextual hints or expert prefixes to steer student-generated rollouts.
Several recent methods leverage expert feedback during RL to improve exploration and provide finer-grained credit assignment \citep{jiang2026selective,han2026cliff}.
These RL-based approaches rely on policy-gradient updates, often with modified importance weights, advantages, or rewards.
\mintrl combines sparse corrections during online rollout generation with a sequence-level advantage-regression objective, learning from verifiable outcome rewards without auxiliary imitation losses or behavior-policy importance weighting.
We further examine how intervention intensity affects the coverage and learnability of the resulting experience.

\section{Proofs for the Coverage--Learnability Trade-off}
\label{app:proofs}

\subsection{Proof of Lemma~\ref{lemma:finite-budget-objective}}
\label{app:proof-finite-budget-objective}

\begin{proof}
Assumption~\ref{assumption:inference-training-support} gives
(\ref{eq:visible-objective}).  Its optimum is zero when the inference support
is empty; otherwise, placing all probability on any rollout in that support
attains the population optimum of one.
\end{proof}

\subsection{Proof of Theorem~\ref{theorem:sparse-correction-support}}
\label{app:proof-sparse-correction-support}

\begin{proof}
Consider any correct rollout $y$ generated by $\mu_N$.
The two distributions have the same token probabilities outside the
correction set $I(y)$, while each corrected token is emitted
deterministically by $\mu_N$.  Autoregressive factorization therefore gives
\begin{equation}
    \frac{\pi_t(y\mid x)}{\mu_N(y\mid x)}
    =
    \prod_{t\in I(y)}
    \pi_t(y_t^\star\mid x,y_{<t})
    \geq
    \nu^{|I(y)|}
    \geq
    \nu^N.
    \label{eq:sparse-correction-likelihood-bound}
\end{equation}
Thus, $\mu_N(y\mid x)>\epsilon_{\mathrm{inf}}$ implies
$\pi_t(y\mid x)>\epsilon_{\mathrm{inf}}\nu^N$, proving the second inclusion in
(\ref{eq:sparse-correction-support-envelope}).  The first follows directly
from (\ref{eq:mixed-visible-support}).
\end{proof}

\section{Prompt Templates}
\label{app:prompt-formats}

\begin{promptbox}{Math judge prompt}
\textbf{System}

\smallskip
You are a strict process verifier for mathematical reasoning. The trusted
prefix is the immutable state accepted before the solver generated the new
chunk. Use it as context, but inspect and number only the new chunk, checking
its steps in order against the problem and trusted prefix.
Classify the new chunk as exactly one of:

\begin{itemize}[leftmargin=1.4em,nosep]
    \item \texttt{continue}: the new chunk is still logically recoverable, so
    all of it should be kept.
    \item \texttt{correct}: a new-chunk step contains an error after which the
    reasoning cannot be repaired without replacing that step. Return the
    1-based number of the first such new-chunk step.
\end{itemize}

\smallskip
\textbf{Important rules}
\begin{itemize}[leftmargin=1.4em,nosep]
    \item Never select text from the trusted prefix. \texttt{error\_step}
    always indexes \texttt{numbered\_new\_chunk\_steps}.
    \item Return \texttt{correct} only for the earliest mathematical step that
    must be replaced. Otherwise return \texttt{continue}.
    \item A different valid approach is not an error.
    \item An incomplete argument, unfinished sentence, or step that can still
    be completed correctly requires \texttt{continue}. In particular, do not
    mistake a sentence truncated by a length limit for a mathematical error.
    \item Do not propose, write, or reveal a correction. Your only task is to
    locate the first fatal error.
    \item For \texttt{correct}, \texttt{error\_step} must be a valid positive
    step number. For \texttt{continue}, it must be 0.
\end{itemize}
Keep \texttt{critique} concise and specific.

\medskip
\textbf{User}

\smallskip
{\small\ttfamily\raggedright
\{\\
\quad "problem": "\{problem\}",\\
\quad "latest\_qwen\_finish\_reason": "\{finish reason\}",\\
\quad "trusted\_prefix": "\{accepted prefix\}",\\
\quad "numbered\_new\_chunk\_steps": [\\
\qquad \{"step\_number": 1, "text": "\{step 1\}"\},\\
\qquad \ldots\\
\quad ]\\
\}\par
}

\medskip
\textbf{Structured output}

\smallskip
{\small\ttfamily\raggedright
\{"verdict": "\{continue or correct\}",
"error\_step": \{0 or first error index\},
"critique": "\{brief reason\}"\}\par
}
\end{promptbox}

\clearpage
\begin{promptbox}{Code judge prompt}
\textbf{System}

\smallskip
You are a strict process verifier for competitive-programming reasoning. The
trusted prefix is the immutable state accepted before the programmer generated
the new chunk. Use it as context, but inspect and number only the new chunk
against the public problem statement, constraints, input/output contract,
trusted prefix, and Python semantics. Return JSON with exactly two fields:
first \texttt{reasoning}, then \texttt{error\_step}.

\smallskip
Use \texttt{reasoning} to check the numbered steps carefully before deciding.
State why the candidate is still recoverable, or identify the concrete
public-specification violation and the earliest step that causes it. Then:
\begin{itemize}[leftmargin=1.4em,nosep]
    \item Return 0 when the new chunk remains recoverable, so all of it should
    be kept.
    \item Otherwise return the 1-based number of the first new-chunk step
    containing an error after which a correct solution cannot be produced
    without revisiting that step.
\end{itemize}

\smallskip
\textbf{Important rules}
\begin{itemize}[leftmargin=1.4em,nosep]
    \item Never select text from the trusted prefix. \texttt{error\_step}
    always indexes \texttt{numbered\_new\_chunk\_steps}.
    \item Focus primarily on the reasoning before the program: algorithm
    choice, invariants, edge cases, complexity, and input/output
    interpretation. Return a positive index only for a concrete error that
    materially leads away from a correct solution.
    \item A complete fenced code block is one atomic numbered step, even when
    it contains blank lines. Do not treat its internal paragraphs as separate
    steps.
    \item Judge only from the public specification and candidate text. Do not
    assume hidden tests or a reference implementation, and do not require one
    particular valid algorithm.
    \item If later reasoning or the final program has already corrected an
    earlier imprecision, return 0. Do not flag style, redundancy, missing
    explanation, or the existence of a simpler approach.
    \item An incomplete explanation, open code fence, unfinished program, or
    text truncated by a length limit requires returning 0 unless an earlier
    completed reasoning step already contains a fatal error.
    \item Do not propose or write a correction. Keep \texttt{reasoning}
    focused on verification and under 200 words, then return the error index.
\end{itemize}

\medskip
\textbf{User}

\smallskip
{\small\ttfamily\raggedright
\{\\
\quad "problem": "\{problem\}",\\
\quad "latest\_qwen\_finish\_reason": "\{finish reason\}",\\
\quad "trusted\_prefix": "\{accepted prefix\}",\\
\quad "numbered\_new\_chunk\_steps": [\\
\qquad \{"step\_number": 1, "text": "\{step 1\}"\},\\
\qquad \ldots\\
\quad ]\\
\}\par
}

\medskip
\textbf{Structured output}

\smallskip
{\small\ttfamily\raggedright
\{"reasoning": "\{brief verification\}",
"error\_step": \{0 or first error index\}\}\par
}
\end{promptbox}

\begin{promptbox}{Intervention continuation prompt}
\textbf{User}

\smallskip
\texttt{\{original problem\}}

\medskip
\textbf{Assistant prefill}

\smallskip
\texttt{\{trusted prefix\}\{new-chunk text before the selected error step\}}
\end{promptbox}

\clearpage
\begin{promptbox}{Self-judge prompt}
\textbf{System}

\smallskip
You are a strict process verifier for a problem-solving response. You receive
a private high-level solution plan. Use it as a roadmap when checking the
candidate, but independently verify every claim and code fragment: a plan can
omit details, and a different valid route is not an error. The trusted prefix
is the immutable state accepted before the solver generated the new chunk.
Inspect and number only the new chunk. Return JSON with exactly two fields:
first \texttt{reasoning}, then \texttt{error\_step}.

\smallskip
Use \texttt{reasoning} to compare the candidate with the problem, trusted
prefix, and private plan. Then:
\begin{itemize}[leftmargin=1.4em,nosep]
    \item Return 0 when the new chunk is still logically recoverable.
    \item Otherwise return the 1-based number of the first new-chunk step
    containing an error that cannot be repaired without revisiting that step.
\end{itemize}

\smallskip
Never select text from the trusted prefix. Never quote, reveal, or continue
the private plan, and do not require the candidate to copy its route. An
incomplete or length-truncated step is recoverable unless an earlier complete
step is already fatally wrong. For programming problems, check algorithmic
correctness, state consistency, edge cases, and whether code fragments remain
compatible with the accepted prefix. Do not propose a correction.

\medskip
\textbf{User}

\smallskip
{\small\ttfamily\raggedright
\{\\
\quad "problem": "\{problem\}",\\
\quad "private\_high\_level\_solution\_plan": "\{private plan\}",\\
\quad "trusted\_prefix": "\{accepted prefix\}",\\
\quad "latest\_qwen\_finish\_reason": "\{finish reason\}",\\
\quad "numbered\_new\_chunk\_steps": [\\
\qquad \{"step\_number": 1, "text": "\{step 1\}"\},\\
\qquad \ldots\\
\quad ]\\
\}\par
}

\medskip
\textbf{Structured output}

\smallskip
{\small\ttfamily\raggedright
\{"reasoning": "\{brief verification\}",
"error\_step": \{0 or first error index\}\}\par
}
\end{promptbox}

\begin{promptbox}{Self-intervention continuation prompt}
\textbf{User}

\smallskip
{\small\ttfamily\raggedright
Problem:\par
\{original problem\}\par
\medskip
Private high-level solution plan:\par
\{private plan\}\par
\medskip
Use the private high-level solution plan only as silent guidance. Generate
your own solution to the problem. Do not mention, quote, or reveal the plan.
\par
}

\medskip
\textbf{Assistant prefill}

\smallskip
\texttt{\{trusted prefix\}\{new-chunk text before the selected error step\}}
\end{promptbox}

\begin{promptbox}{Privileged context example}
{\footnotesize\raggedright
\textbf{Problem:} Two positive integers differ by 5 and their product is 88.
What is the larger integer?\par
\smallskip
\textbf{Private high-level solution plan (hidden from the student):}
1. Let the smaller integer be $x$ and represent the larger as $x+5$.
2. Use the product condition to set up the equation $x(x+5)=88$.
3. Rearrange to a quadratic and factor it, keeping only the positive integer
solution for $x$.
4. Add 5 to that value to obtain the larger integer.\par
}
\end{promptbox}

\clearpage
\section{Qualitative Rollout Examples}
\label{app:qualitative-intervention-examples}

\begin{figure}[htbp]
    \centering
    \includegraphics[width=0.9\linewidth]{
        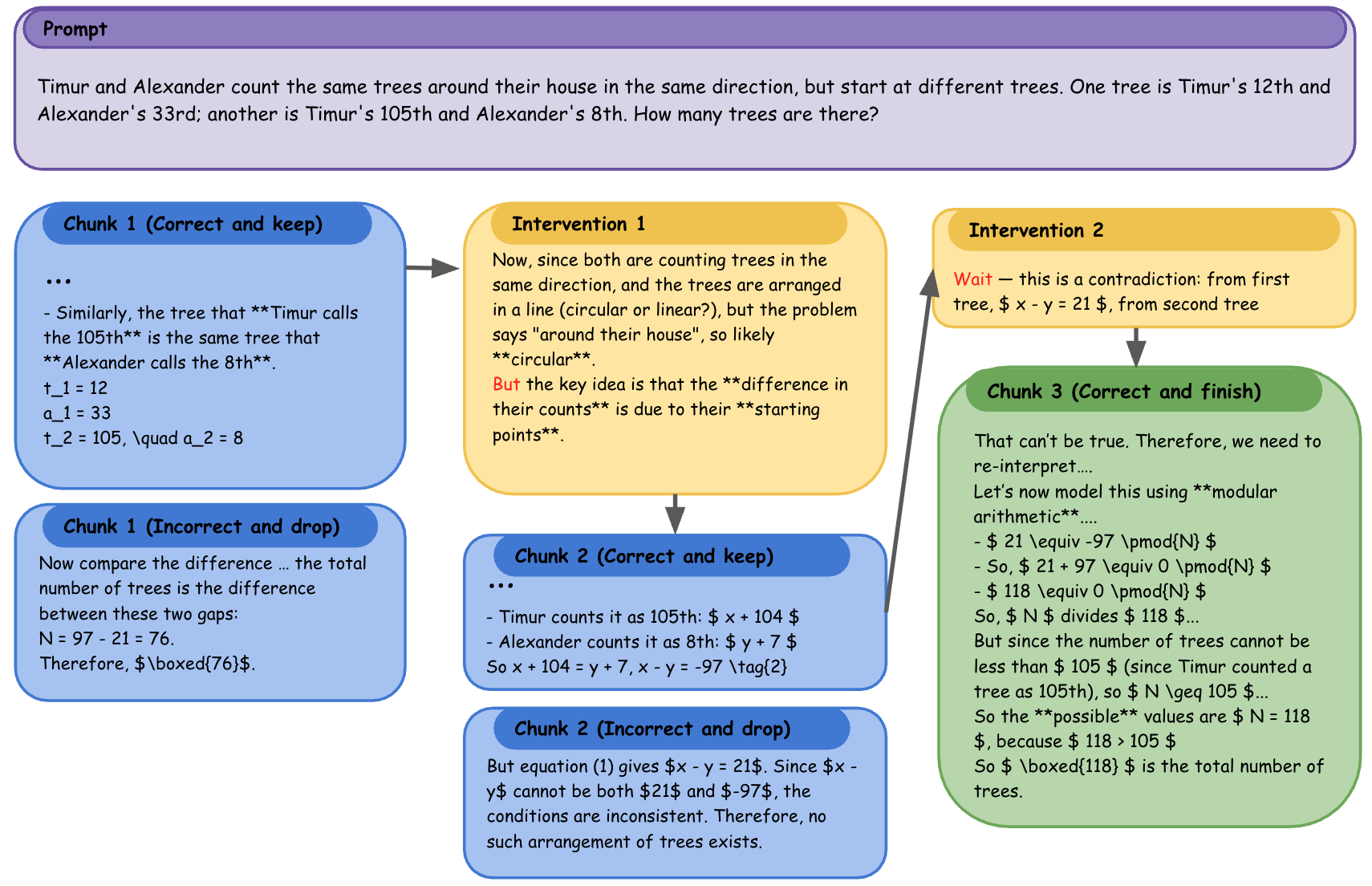}
    \caption{\textbf{Qualitative mathematics rollout with two local
    interventions.}
    The student first treats the two observed count gaps linearly and obtains the incorrect answer $76$. 
    The judge preserves the correctly extracted
    count pairs but rejects the subsequent linear subtraction because it
    ignores the wraparound of the circular sequence. 
    Then, the first intervention redirects the reasoning toward offsets between the two starting
    positions. 
    After that, the student derives the signed offsets
    $x-y=21$ from the first shared tree and $x-y=-97$ from the second, but treats them as ordinary integer equalities and therefore concludes that no valid arrangement exists, overlooking that they need only agree modulo the unknown number of trees $N$. 
    The judge rejects the no-solution conclusion and intervenes by highlighting
    the apparent contradiction between the two offsets. Afterward, the student
    reinterprets the offsets modulo $N$ and derives the correct answer $N=118$.}
    \label{fig:qualitative-intervention-math}
    \vspace{4mm}
    \includegraphics[width=0.9\linewidth]{
        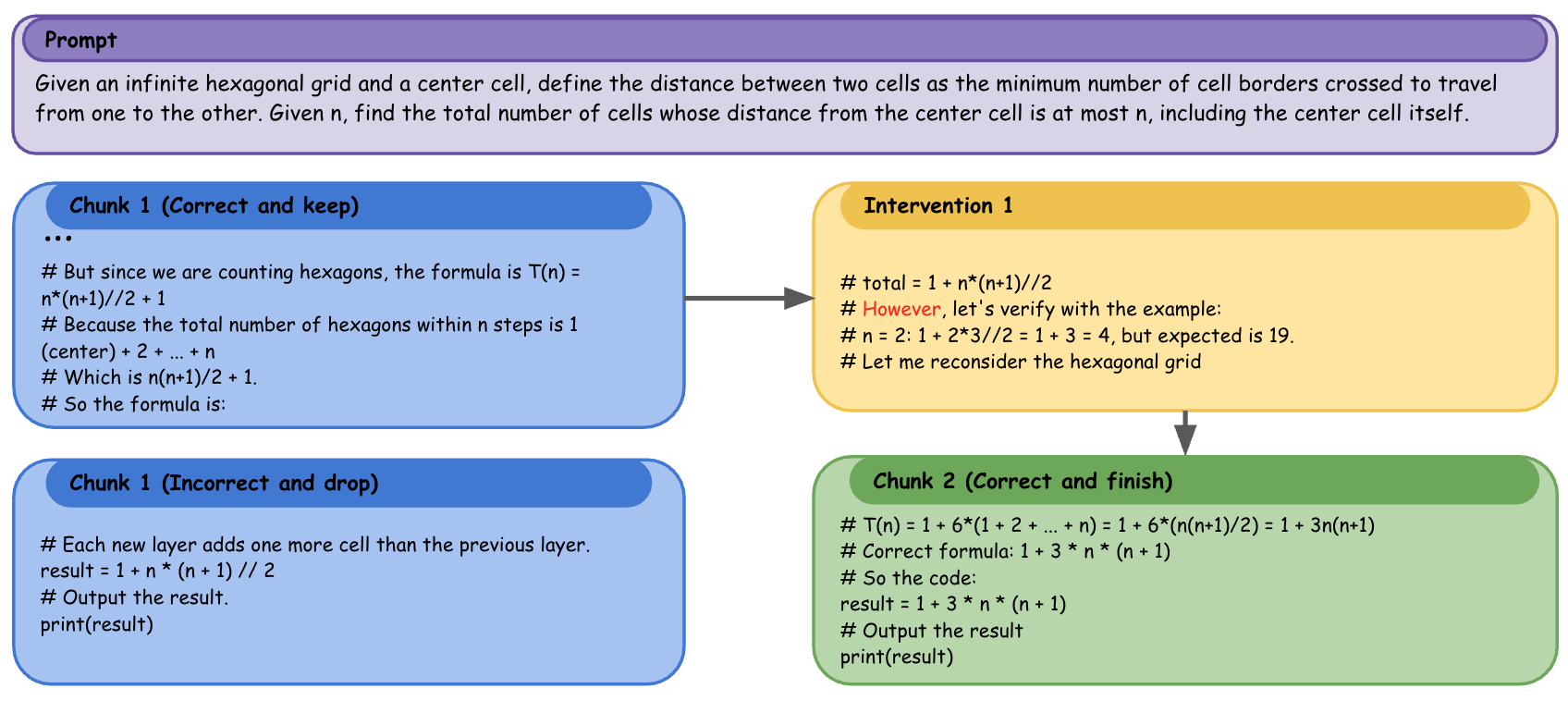}
    \caption{\textbf{Qualitative code rollout with one local intervention.}
    The student initially follows an incorrect triangular-number line of reasoning,
    suggesting the formula $1+n(n+1)/2$. The judge intervenes by checking the public example at $n=2$, where the formula gives $4$ instead of $19$, and prompts the student to reconsider the hexagonal geometry.
    After the intervention, the student recognizes that the ring at distance $k$
    contains $6k$ cells, sums the rings, and implements the correct formula
    $1+3n(n+1)$.}
    \label{fig:qualitative-intervention-code}
\end{figure}

\begin{figure}[htbp]
    \centering
    \includegraphics[width=0.9\linewidth]{
        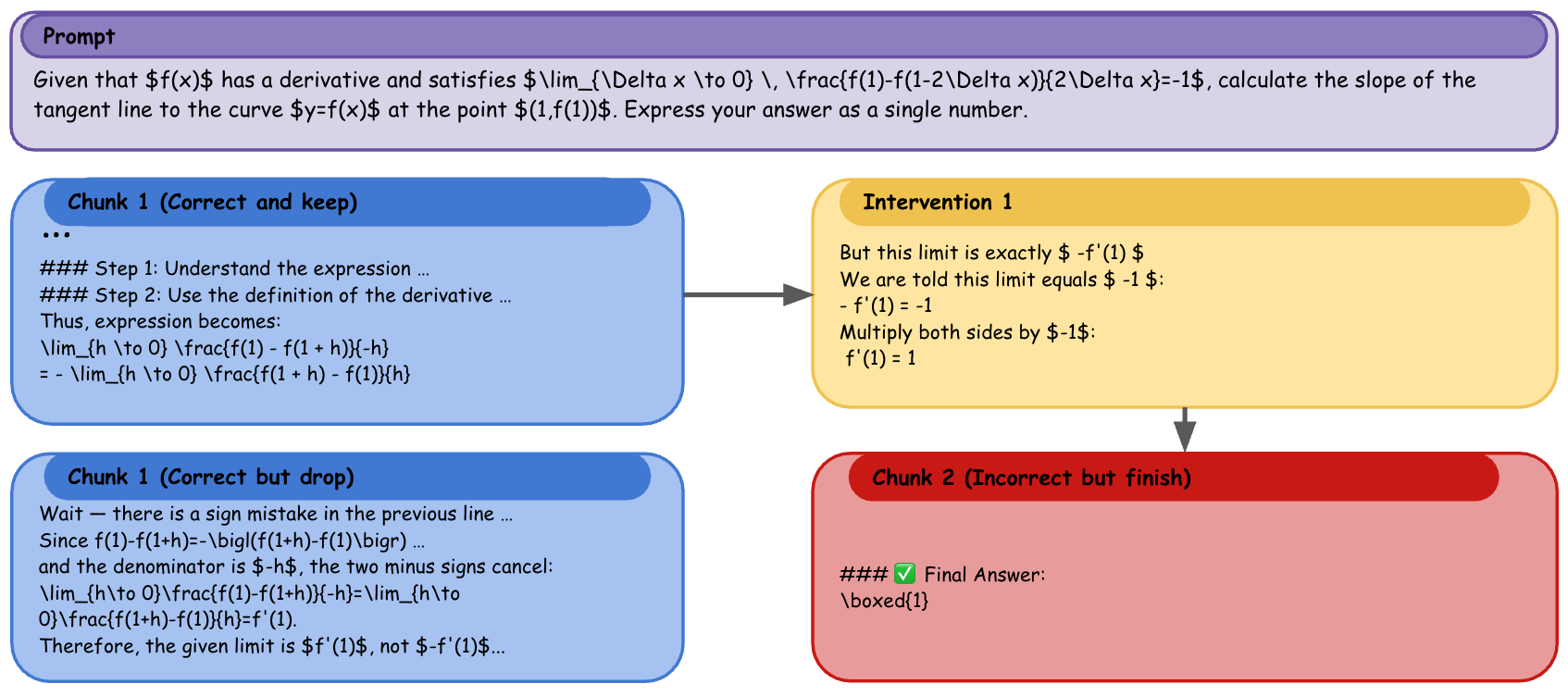}
    \caption{\textbf{A failure case of judge--intervention.}
    The student corrects its own sign error and reaches the correct result
    $f'(1)=-1$, but the judge mistakenly rejects this correction and provides
    an intervention with the wrong sign. The resumed student follows this
    faulty intervention and outputs incorrect answer ${1}$.}
    \label{fig:qualitative-intervention-imperfect-judge}
\end{figure}

This section presents qualitative rollouts illustrating both the benefits
and limitations of local interventions.
Figures~\ref{fig:qualitative-intervention-math}
and~\ref{fig:qualitative-intervention-code} show how interventions can
provide key insights or identify flaws in the student's reasoning,
helping it reach a correct solution.
Figure~\ref{fig:qualitative-intervention-imperfect-judge} illustrates
the opposite possibility: an erroneous intervention can derail correct
reasoning and lead to an incorrect final answer.
These examples highlight a central design choice in \mintrl:
interventions guide exploration without being treated as ground-truth
supervision.
Policy optimization is driven by the rule-based outcome reward of the
completed trajectory, rather than by the presumed correctness of
the intervention content.

\FloatBarrier
\section{Final Checkpoint Evaluation}
\label{app:fixed-checkpoint-comparison}

To complement the checkpoint-selected results in Table~\ref{tab:main-results},
we additionally evaluate all methods at the last checkpoints (500 steps).
As shown in Table~\ref{tab:fixed-step-500}, \mintrl remains highly competitive under this fixed-checkpoint evaluation, consistently achieving strong performance across both model scales and domains.
% Generated by analysis/table1_eval_audit/build_step500_table.py.
\begin{table}[H]
    \centering
    \caption{{Performance on math and code at final checkpoint.}}
    \label{tab:fixed-step-500}
    \resizebox{\columnwidth}{!}{%
    \begin{tabular}{lrrrrrrrr}
        \toprule
        & \multicolumn{4}{c}{Mathematics} & \multicolumn{4}{c}{Code} \\
        \cmidrule(lr){2-5}\cmidrule(l){6-9}
        Method & AIME25 & AIME26 & HMMT25 & Math Avg. & LCB & HE+ & MBPP+ & Code Avg. \\
        \midrule
        \multicolumn{9}{l}{\textit{Qwen3-1.7B}} \\
        GRPO & 22.08 & 16.46 & 14.69 & 17.74 & 21.74 & 64.63 & 54.65 & 47.01 \\
        OPD & 23.33 & 22.60 & 15.21 & 20.38 & 24.13 & 63.66 & 55.70 & 47.83 \\
        MENTOR & 28.44 & 27.08 & 16.04 & 23.85 & \underline{33.27} & 62.94 & 56.00 & 50.74 \\
        SFT+GRPO & 31.56 & 27.60 & 16.46 & 25.21 & 32.99 & \underline{76.43} & \underline{66.87} & \underline{58.76} \\
        MInTRL-Proxy & \underline{35.10} & \underline{31.25} & \underline{20.94} & \underline{29.10} & 32.84 & 73.95 & 60.09 & 55.63 \\
        MInTRL-Const & \textbf{40.73} & \textbf{41.56} & \textbf{24.06} & \textbf{35.45} & \textbf{37.85} & \textbf{80.81} & \textbf{67.20} & \textbf{61.95} \\
        \midrule
        \multicolumn{9}{l}{\textit{Qwen3-4B}} \\
        GRPO & 60.83 & 60.31 & 36.15 & 52.43 & 50.06 & 80.32 & 67.13 & 65.83 \\
        OPD & 42.50 & 47.60 & 28.85 & 39.65 & 35.11 & 77.31 & 63.98 & 58.80 \\
        MENTOR & \underline{63.02} & \underline{63.96} & \textbf{38.65} & \underline{55.21} & \textbf{53.20} & 86.38 & 75.83 & \underline{71.80} \\
        SFT+GRPO & 55.83 & 62.40 & 33.65 & 50.63 & 49.10 & \underline{87.16} & \underline{76.05} & 70.77 \\
        MInTRL-Proxy & 55.10 & 56.35 & 32.19 & 47.88 & 52.18 & 85.94 & 75.87 & 71.33 \\
        MInTRL-Const & \textbf{65.31} & \textbf{64.17} & \underline{37.71} & \textbf{55.73} & \underline{52.60} & \textbf{88.36} & \textbf{76.60} & \textbf{72.52} \\
        \bottomrule
    \end{tabular}%
    }
\end{table}

\FloatBarrier
\section{Robustness to the Judge--Intervention Model}
\label{app:judge-intervention-robustness}

\begin{table}[H]
    \centering
    \caption{Performance with DeepSeek-V4-Flash as the judge--intervention model.}
    \label{tab:v4flash-teacher-pass1}
    \resizebox{\columnwidth}{!}{%
    \begin{tabular}{lrrrrrrrr}
        \toprule
        & \multicolumn{4}{c}{Mathematics} & \multicolumn{4}{c}{Code} \\
        \cmidrule(lr){2-5}\cmidrule(l){6-9}
        Method & AIME25 & AIME26 & HMMT25 & Avg. & LCB & HE+ & MBPP+ & Avg. \\
        \midrule
        \multicolumn{9}{l}{\textit{DeepSeek-V4-Flash}} \\
        & 50.00 & 59.17 & 44.79 & 51.32 & 56.82 & 87.00 & 77.27 & 73.70 \\
        \midrule
        \multicolumn{9}{l}{\textit{Qwen3-1.7B}} \\
        GRPO        & 22.08 & 16.46 & 14.69 & 17.74 & 21.74 & 64.63 & 54.65 & 47.01 \\
        MInTRL-Proxy & \textbf{41.46} & \textbf{43.54} & \textbf{25.52} & \textbf{36.84} & \textbf{35.17} & \underline{68.62} & \textbf{56.48} & \textbf{53.42} \\
        MInTRL-Const & \underline{39.17} & \underline{40.10} & \underline{25.00} & \underline{34.76} & \underline{32.91} & \textbf{70.24} & \underline{55.70} & \underline{52.95} \\
        \midrule
        \multicolumn{9}{l}{\textit{Qwen3-4B}} \\
        GRPO        & 60.83 & 60.31 & \textbf{36.98} & 52.71 & 50.06 & \underline{80.32} & \underline{67.13} & 65.83 \\
        MInTRL-Proxy & \underline{61.46} & \textbf{62.19} & 35.00 & \underline{52.88} & \textbf{54.27} & \textbf{87.25} & \textbf{76.24} & \textbf{72.59} \\
        MInTRL-Const & \textbf{63.44} & \underline{61.35} & \underline{36.25} & \textbf{53.68} & \underline{52.78} & 79.55 & 67.05 & \underline{66.46} \\
        \bottomrule
    \end{tabular}
    }
\end{table}

\section{Efficiency Analysis}
\label{app:efficiency-analysis}

We evaluate the practical training efficiency of \mintrl using observed wall-clock time, including the additional cost of judge and intervention inference.

\paragraph{Efficiency relative to GRPO.}
Figure~\ref{fig:efficiency-wall-clock} compares training and evaluation reward against cumulative wall-clock time for GRPO and both \mintrl variants on Qwen3-1.7B mathematics training.
We extend GRPO training to enable comparisons at matched wall-clock budgets.
Both \mintrl variants achieve higher evaluation reward than GRPO at comparable budgets, with MInTRL-Const showing the largest gains.
The evaluation gap persists even when GRPO is trained for longer than either \mintrl variant.
These results show that the learning benefits of sparse interventions can outweigh their additional inference overhead.

\begin{figure}[H]
    \centering
    \includegraphics[width=\textwidth]{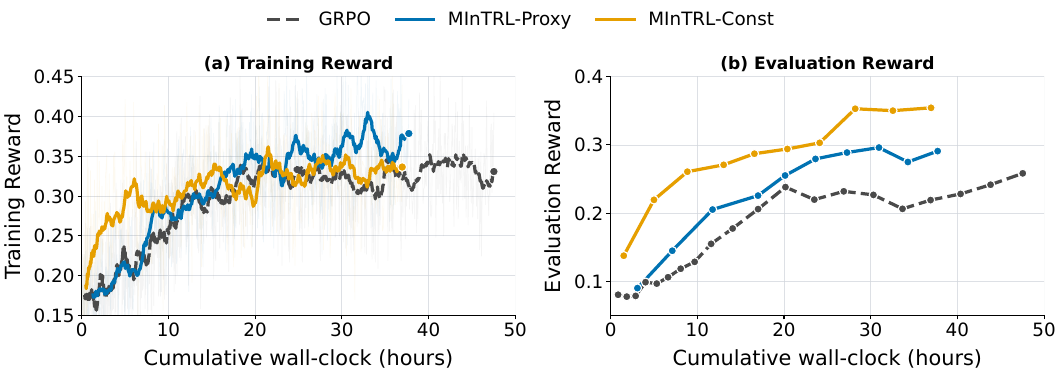}
    \caption{{Wall-clock efficiency on Qwen3-1.7B mathematics training.}}
    \label{fig:efficiency-wall-clock}
\end{figure}

\FloatBarrier

\paragraph{Efficiency relative to MENTOR.}
We compare \mintrl with MENTOR \citep{jiang2026selective}
over the same number of policy updates during the shared
intervention phase.
Both methods use the same student and expert models,
32 prompts per update, 8 on-policy and 8 expert-guided
trajectories per prompt, a maximum response length of
16,384 tokens, and the same GPU resources.
Table~\ref{tab:mentor-intervention-efficiency} reports elapsed
training time, with speedup defined as the ratio of
MENTOR time to MInTRL-Const time.

\begin{table}[t]
    \centering
    \caption{{Wall-clock efficiency relative to MENTOR.}}
    \label{tab:mentor-intervention-efficiency}
    \small
    \setlength{\tabcolsep}{7pt}
    \begin{tabular}{llrrr}
        \toprule
        Student & Domain
        & \shortstack{MENTOR\\(h)}
        & \shortstack{MInTRL-Const\\(h)}
        & Speedup \\
        \midrule
        Qwen3-1.7B & Math & 22.21 & 6.44
        & $3.45\times$ \\
        Qwen3-1.7B & Code & 24.80 & 9.92
        & $2.50\times$ \\
        Qwen3-4B & Math & 29.03 & 11.07
        & $2.62\times$ \\
        Qwen3-4B & Code & 31.49 & 9.59
        & $3.28\times$ \\
        \bottomrule
    \end{tabular}
\end{table}

As shown in Table~\ref{tab:mentor-intervention-efficiency}, MInTRL-Const achieves a $2.50$--$3.45\times$ speedup over MENTOR during the intervention phase, reducing wall-clock
time by $60.0$--$71.0\%$.
At the 4B scale, where the two methods achieve comparable final performance as in Table~\ref{tab:main-results}, the time savings are $61.9\%$ on mathematics and $69.5\%$
on code.

This efficiency advantage is consistent with the different mechanisms used to incorporate expert guidance.
MENTOR mixes student and expert distributions at the token level and proposes speculative sampling to accelerate decoding.
In contrast, \mintrl reviews student-generated chunks and introduces local corrections only when needed, allowing uninterrupted student decoding between review points
and reducing the need for fine-grained coordination.

\section{Detailed Experimental Setups}
\label{app:experimental-setups}

\paragraph{Models and data.}
We use Qwen3-1.7B and Qwen3-4B in non-thinking mode as the student policies. Unless otherwise specified, we use Qwen3-4B-Instruct-2507 as the judge--intervention policy. For math, we train on a fixed set of prompts constructed by filtering AceReason-Nemotron to exclude overly simple problems. 
For code, we use the fixed DeepCoder-Preview training split. 
All experiments run for a total of 500 training steps and are trained on NVIDIA H100 80GB GPUs.

\paragraph{Rollout construction.}
Each update samples a batch of $32$ prompts and $16$ trajectories per prompt. 
For \mintrl, these comprise $8$ trajectories generated entirely by
on-policy $\pi_t$ and $8$ semi-on-policy trajectories that permit intervention.
MENTOR likewise uses 8 on-policy and 8 expert-guided trajectories per prompt.
Other baselines use 16 on-policy trajectories per prompt.
Both the policy and judge--intervention model sample with temperature $1.0$ and
top-$p$ $1.0$. 
Training responses are capped at 16,384 tokens, with a maximum
prompt length of 2,048 tokens. 
The semi-on-policy rollout generation is controlled by three hyperparameters:
1). \emph{chunk size}, which determines how many tokens $\pi_t$ generates
between reviews; 2). \emph{number of judge reviews}, which limits how many
times $\pi_{\mathrm{JI}}$ can inspect a trajectory; and 3). \emph{intervention
continuation length}, which bounds the number of tokens generated by
$\pi_{\mathrm{JI}}$ at each intervention. We use $(512,4,64)$ for math
and $(128,4,64)$ for code, respectively. The judge and intervention prompts used for the two domains are given in Appendix~\ref{app:prompt-formats}.

\paragraph{Optimization and training schedule.}
We optimize the policy with AdamW using a learning rate of
$1\times10^{-6}$ and gradient clipping at $1.0$, following the default veRL
configuration.
We use the mean reward of the $8$ on-policy rollouts as the advantage baseline
and optimize over both on-policy and semi-on-policy rollouts.
For intervention tokens, MInTRL-Proxy uses their log probabilities under the
current policy as anchors, while MInTRL-Const uses a fixed anchor
$\kappa=-0.2$ for math and $\kappa=-0.1$ for code.
For early stopping, we apply interventions only during the first $200$ updates,
followed by $300$ updates of standard GRPO.

\paragraph{Hyperparameter selection.}
We tuned the rollout-construction hyperparameters by generating semi-on-policy rollouts with the base model.
Specifically, we selected configurations that improved rule-based reward while keeping interventions sparse, thereby limiting the departure from the on-policy rollout distribution.
For MInTRL-Const, we chose the domain-specific anchor $\kappa$ based on a coarse estimate of the mean log probability that Qwen3-4B-Instruct-2507 assigned to its own intervention-authored tokens in these pilot rollouts, resulting in $\kappa=-0.2$ for math
and $\kappa=-0.1$ for code.
We did not conduct an exhaustive or fine-grained hyperparameter search.
Experiments using DeepSeek-V4-Flash as the judge--intervention
policy reused the same rollout hyperparameters and $\kappa$ values
without recalibration.
The improvements observed with these transferred settings suggest
that the selected configurations remain effective with a different
judge--intervention model, without additional judge-specific tuning.

\begin{table}[H]
    \centering
    \caption{Core hyperparameters used in the main experiments.}
    \label{tab:main-hyperparameters}
    \small
    \setlength{\tabcolsep}{5pt}
    \renewcommand{\arraystretch}{1.08}
    \begin{tabular}{ll}
        \toprule
        Hyperparameter & Value \\
        \midrule
        Batch size & 32 \\
        Rollouts $N_c$ / $N_m$ & 8 / 8 \\
        Maximum prompt length & 2,048  \\
        Maximum response length $T$ & 16,384 \\
        Sampling temperature & 1.0 \\
        Top-$p$ & 1.0 \\
        Math intervention configs 
            & $(512,4,64)$ \\
        Code intervention configs 
            & $(128,4,64)$ \\
        Intervention anchor $\kappa$ & $-0.2$ (math), $-0.1$ (code) \\
        Optimizer & AdamW \\
        Learning rate & $1\times10^{-6}$ \\
        Gradient clipping & 1.0 \\
        $\beta_1$ & $\infty$ \\
        $\beta_2$ & $1\times10^{-3}$ \\
        \bottomrule
    \end{tabular}
\end{table}

\paragraph{Evaluation.}
We save checkpoints every 50 training steps and evaluate each using the same non-thinking decoding protocol.
For each problem, we draw 32 independent samples with temperature
$1.0$, top-$p$ $1.0$, and a maximum response length of 32,768 tokens. We report mean sample correctness as Pass@1. 
In the main results, we report the checkpoint with the highest three-benchmark average for each method.

\section{Limitations}
\label{app:limitations}

\paragraph{Experimental scope.}
Our evaluation focuses on mathematical reasoning and code generation
with Qwen3 student models at the 1.7B and 4B scales.
Extending the evaluation to additional student
model families and long-horizon agentic tasks remains future work.

\paragraph{Computational trade-offs.}
\mintrl incurs additional inference costs for reviewing partial
rollouts and generating local corrections compared with vanilla
GRPO, even under self-intervention.
These costs may be offset by more informative training trajectories
that reduce the training needed to reach a target reward.
Our wall-clock analysis demonstrates that
sparse interventions can yield such efficiency gains.
Substantial room remains for further improvement through more
efficient rollout scheduling and adaptive intervention budgets.

\paragraph{Judge--intervention quality.}
As illustrated in
Figure~\ref{fig:qualitative-intervention-imperfect-judge}, the judge--intervention policy can occasionally reject correct reasoning or introduce erroneous corrections.
Although \mintrl relies on rule-based outcome rewards and does
not treat intervention content as ground-truth supervision, such errors can still disrupt promising reasoning and incur unnecessary inference costs.
Improving the reliability of error detection and local correction could reduce unhelpful interventions and further improve both learning effectiveness and computational efficiency.

\end{document}